\newcommand{\myparagraph}[1]{\vspace{0.5em}\noindent\textbf{#1}}
\def\0{\textbf{0}}
\def\1{\textbf{1}}
\def\a{\boldsymbol{a}}
\def\k{\boldsymbol{k}}
\def\v{\boldsymbol{v}}
\def\p{\boldsymbol{p}}
\def\v{\boldsymbol{v}}
\def\x{\boldsymbol{x}}
\def\y{\boldsymbol{y}}
\def\K{\boldsymbol{K}}
\def\V{\boldsymbol{V}}
\newcommand{\RR}{I\!\!R} 
\renewcommand{\mathbf}{\boldsymbol}
\newcommand{\bb}{\mathbb}
\newcommand{\expect}[1]{\bb E\left[ #1 \right]}
\newcommand{\expectwrt}[2]{\bb E_{#1}\left[ #2 \right]}
\theoremstyle{plain}
\newtheorem{hypothesis}{Hypothesis}[section]
\newtheorem{theorem}{Theorem}[section]
\newtheorem{lemma}[theorem]{Lemma}
\theoremstyle{definition}
\theoremstyle{remark}
\newcommand*\samethanks[1][\value{footnote}]{\footnotemark[#1]}
\def\showcomments{1}  
\newcommand{\cyou}[1]{}
\newcommand{\srinadh}[1]{}
\newcommand{\ankit}[1]{}
\newcommand{\ankits}[1]{}
\newcommand{\kye}[1]{}
\newcommand{\sashank}[1]{}
\newcommand{\lizonglin}[1]{}
\newcommand{\daliangli}[1]{}
\newcommand{\newchange}[1]{{#1}}
\newcommand{\iclronly}[1]{}
\newcommand{\arxivonly}[1]{#1}
\newcommand{\iclronly}[1]{{#1}}
\newcommand{\arxivonly}[1]{}
\newcommand{\cyou}[1]{{\color{blue}{[CY:#1]}}}
\newcommand{\srinadh}[1]{{\color{green} #1}}
\newcommand{\ankit}[1]{{\color{red} #1}}
\newcommand{\ankits}[1]{{\color{red} [AR:#1]}}
\newcommand{\kye}[1]{{\color{blue} [KY:#1]}}
\newcommand{\sashank}[1]{{\color{blue} [SR:#1]}}
\newcommand{\lizonglin}[1]{{\color{cyan}{[ZL:#1]}}}
\newcommand{\daliangli}[1]{{\color{blue}{[DL: #1]}}}
\newcommand{\newchange}[1]{{\color{magenta}{#1}}}
\newcommand{\iclronly}[1]{{\color{red}{[ICLR Only:#1]}}}
\newcommand{\arxivonly}[1]{{\color{blue}{[arXiv Only:#1]}}}
\title{The Lazy Neuron Phenomenon: On Emergence of Activation Sparsity in Transformers}
\author[]{Zonglin Li\thanks{Equal contribution}~}
\author[]{~Chong You\samethanks~}
\author[]{~Srinadh Bhojanapalli}
\author[]{~Daliang Li}
\author[]{~Ankit Singh Rawat}
\author[]{~Sashank J. Reddi}
\author[]{~Ke Ye}
\author[]{~Felix Chern}
\author[]{~Felix Yu}
\author[]{~Ruiqi Guo}
\author[]{~Sanjiv Kumar}
\affil[]{Google Research, New York City}
\begin{document}

\maketitle

\begin{abstract}
This paper studies the curious phenomenon for machine learning models with Transformer architectures that their activation maps are \emph{sparse}. 
By activation map we refer to the intermediate output of the multi-layer perceptrons (MLPs) after a ReLU activation function, and by ``sparse'' we mean that on average very few entries (e.g., 3.0\% for T5-Base and 6.3\% for ViT-B16) are nonzero for each input to MLP.
Moreover, larger Transformers with more layers and wider MLP hidden dimensions are sparser as measured by the percentage of nonzero entries. 
Through extensive experiments we demonstrate that the emergence of sparsity is a prevalent phenomenon that occurs for both natural language processing and vision tasks, on both training and evaluation data, for Transformers of various configurations, at layers of all depth levels, as well as for other architectures including MLP-mixers and 2-layer MLPs. 
We show that sparsity also emerges using training datasets with random labels, or with random inputs, or with infinite amount of data, demonstrating that sparsity 
is not a result of a specific family of datasets.
We discuss how sparsity immediately implies a way to significantly reduce the FLOP count and improve efficiency for Transformers. 
Moreover, we demonstrate perhaps surprisingly that enforcing an even sparser activation via Top-$k$ thresholding with a small value of $k$ brings a collection of desired but missing properties for Transformers, namely less sensitivity to noisy training data, more robustness to input corruptions, and better calibration for their prediction confidence.
\end{abstract}

\section{Introduction}

The great success of modern machine learning for applications in computer vision, natural language processing, game playing, and beyond is driven primarily by the computational model known as deep neural networks (DNNs) \citep{lecun2015deep}. 
With inspirations drawn from information processing in biological brains, DNNs are \emph{artificial} neural networks constructed from distributed computational nodes (a.k.a. neurons) with inter-connections learned from data.
Compared to shallow machine learning models, DNNs possess superior learning capacity and hence can handle complex real-world tasks. 

Although motivated from biological brains,
there are differences at very fundamental levels on how artificial and biological neural networks work. 
One of such differences is in the sparsity of computation. 
Evidence from neuroscience suggests that neural activity in biological brains is \emph{sparse}, namely, only a small percentage of all neurons fire at each time \citep{kerr2005imaging,poo2009odor,barth2012experimental,ahmed2020hippocampal}. 
Sparse firing suggests that despite having billions of neurons, only a small fraction of the brain participates in computation at each time.
This may explain why brains can sustain at a very low energy cost. 
In contrast, learning and inference with DNNs rely primarily on dense computations where all neurons are involved for any input. 
In fact, modern computational hardware for deep neural networks, such as GPUs and TPUs, are designed to facilitate massive scale dense computations. 
Even with such dedicated hardware, DNNs are still notoriously resource-demanding to train and deploy. 
Aside from computation efficiency, artificial neural networks also lag far behind biological ones in terms of robustness to input perturbation, error correction for erroneous training labels, confidence calibration for the predictions, etc. 

\subsection{An Intriguing Observation: Activations are \emph{Sparse} in Trained Transformers} 

This paper provides an extensive study on a surprising observation that despite performing dense computations, DNNs produce very \emph{sparse} activation in its intermediate layers once trained\footnote{This implies, as we explain in details later, that a lot of the computations are spent in vain with multiplying a value by zero.}.
Specifically, we study \emph{Transformer} \citep{vaswani2017attention}, a DNN model architecture that has become a workhorse for modern applications. 
Transformers are constructed from interweaving a self-attention module and a multi-layer perceptrons (MLPs) of depth 2, and the focus of this paper is on
the activation map in the intermediate output of MLPs (after the activation function). 
Figure~\ref{fig:T5_sparsity_across_layers} shows the sparsity of the activation maps from the training data, measured by the percentage of nonzeros, in all MLP layers of a T5-Base model which is a Transformer based encoder-decoder model for natural language processing \citep{raffel2020exploring}. 
We see that the percentage of nonzero entries is around 50\% at initialization, which is expected: randomly initialized weights produce roughly equal numbers of positive and negative entries in the pre-activation map, resulting in $\sim 50$ \% non-zeros after the ReLU. 
However, at the end of training the percentage of nonzero entries reduces drastically: the average value across all encoder-decoder layers is 2.7\% with the largest one being 12.0\% and the smallest one being only 1.1\%.
The emergence of sparse activation in Transformers bears a similarity to the sparsity of neural activities in biological brains, revealing an interesting connection between artificial and biological neural networks. 
Moreover, unlike classical sparse methods where such a connection is established via \emph{explicit} sparse regularization \citep{olshausen1996emergence}, the sparsity observed in Transformers is emergent without any explicit design. 

It is worth noting that the observation that Transformers produce sparse activations is previously reported in \cite{zhang2022moefication}.
Our paper significantly extends upon results in \cite{zhang2022moefication} to demonstrate that sparsity emerges prevalently at all layers of Transformers, for both language and vision tasks, on both training and evaluation data, and for some architectures beyond Transformers. 
We also examine the activation of individual neurons, to show that sparsity is not caused by ``dead'' neurons and that the percentage of activation has a long tail distribution.
In addition, by experiments with particularly designed datasets, as well as theoretical analysis of gradient at the beginning of training, we show that the emergence of sparsity may be due to the training dynamics rather than particular choices of datasets.
Finally, our paper provides empirical evidence that sparsity is positively correlated with model robustness and calibration.

\begin{figure}[t]
\centering  
\begin{subfigure}[b]{0.48\linewidth}
    \centering
    \includegraphics[width=\textwidth]{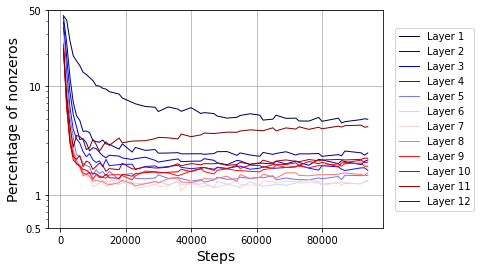}
    \caption{T5 Encoder}
    \label{fig:T5_sparsity_across_layers_encoder}
\end{subfigure}
\begin{subfigure}[b]{0.385\linewidth}
    \centering
    \includegraphics[width=\textwidth]{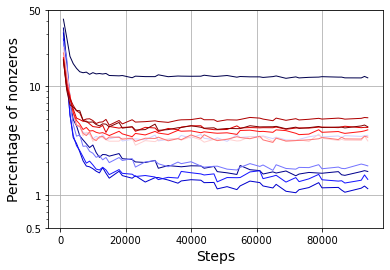}
    \caption{T5 Decoder}
    \label{fig:T5_sparsity_across_layers_decoder}
\end{subfigure}
\caption{Percentage of nonzero entries (y-axis, log scale) in the activation map as a function of number of training steps (x-axis) for a T5-Base model trained with the span corruption objective on the C4 dataset. \textbf{Left:} layers (from shallow to deep) of the encoder. \textbf{Right:} layers of the decoder.}
\label{fig:T5_sparsity_across_layers}
\end{figure}


\subsection{Prevalence, Causes, and Benefits of Sparsity} 

This paper provides a study on the aforementioned phenomenon of sparse activation in trained Transformer models, with a focus on answering the following three questions. 
First, is the phenomenon in Figure~\ref{fig:T5_sparsity_across_layers} a corner case or is it occurring more broadly? 
Second, what are the causes for the emergence of sparsity? 
Third, why should we care about the sparsity in DNNs, other than the appeal of its similarity to biological brains? 
Our main results along these lines are summarized below.

\begin{enumerate}[leftmargin=*,topsep=0.1em,itemsep=-0.1em]
    \item \textbf{Sparsity is a prevalent phenomenon.} We show in Section~\ref{sec:prevalence} that the emergence of sparsity in activation maps of T5 as reported in Figure~\ref{fig:T5_sparsity_across_layers}  is not an isolated and cherry-picked case. Rather, sparsity is prevalent, and occurs broadly in Transformer models: it emerges in all layers of a Transformer, for Transformers trained on both vision and natural language data, for Transformers of various configurations, and for activation maps computed on both training and test data, etc. 
    Moreover, through controlled experiments on the width and depth of Transformers, we reveal that larger models are sparser, as measured by percentage of nonzero entries.
    We also show in the Appendix~\ref{sec:additional_results_prevalence} that sparsity emerges with many other architectures and with different optimizers.
    
    \item \textbf{Sparsity comes from training dynamic?} 
    Towards understanding where sparsity comes from, one argument is that commonly used image and natural language training datasets entail a compact representation due to information in the labels or intrinsic low-dimensional structures of the natural data. 
    Another hypothesis is that sparsity has nothing to do with commonly used training datasets, and arises as a result of modern over-parameterized models being able to fit the training data even when it is generated in random.
    In Section~\ref{sec:causes}, we design experiments using training datasets with random labels, or random images, or infinitely amount of data, to show that none of the above fully explains the emergence of sparsity. 
    Based on our observations, we speculate that the sparsity may be attributed to the training dynamic in the optimization process. 
    In particular, we show theoretically with a simplified model architecture that the descending direction of gradient at the beginning of training points to decreasing the value of activations. 

    \item \textbf{Sparsity improves efficiency, robustness, and calibration.} 
    Sparsity of activation map in trained Transformers implies that a large proportion of the computation during inference is spent on multiplying values by zero. Hence, FLOPs can be drastically reduced by avoiding all such computations, which we discuss in Section~\ref{sec:efficiency}. 
    Motivated by this observation, and to obtain reduced FLOPs not only after training but throughout training, we introduce \emph{Top-$k$ Transformer} in Section~\ref{sec:top-k_transformer}, a simple modification of Transformers where a Top-$k$ thresholding is applied to the activation maps\footnote{The approach is previously adopted in ConvNets for improving model robustness \citep{ahmad2019can}, and more recently in  \cite{gupta2021memory} for improving memory efficiency of Transformers.}. 
    We show that Top-$k$ Transformers with a reasonable sized $k$ has on par performance with vanilla Transformers.
    To demonstrate the computation benefits of Top-$k$ Transformers, we provide proof-of-concept results on wall time reduction for the task of unbatched decoding on TPUv4 with a large Top-$k$ T5.
    Meanwhile, we emphasise that this result is far from fully realizing the benefit of sparse activation, due to a lack of hardware support for sparse computation.
    
    While it is straightforward to associate sparsity with computational efficiency, it may be less obvious and somewhat surprising that sparsity is associated with reliability of the models as well. 
    We show in Section~\ref{sec:robustness_calibration} that enforcing explicit sparsity via Top-$k$ Transformers improves model performance in terms of less sensitivity to noisy training data, less sensitivity to input corruptions, and better confidence calibration. 
    
\end{enumerate}

\subsection{Experimental Setup}
\label{sec:experimental-setup}

We study the sparsity in activation maps of Transformers with two commonly used Transformer models, namely Text-to-Text Transfer Transformer (i.e., T5) and Vision Transformer (i.e., ViT). 
\begin{itemize}[leftmargin=*,topsep=0.1em,itemsep=-0.1em]
    \item \textbf{T5} is an encoder-decoder model for natural language processing tasks \cite{raffel2020exploring}. 
    We train T5 on the Colossal Clean Crawled Corpus (C4) using the span corruption task as suggested by \cite{raffel2020exploring}.
    
    \item \textbf{ViT} is an encoder model for vision tasks \cite{dosovitskiy2020image}. 
    Unless specified otherwise, we train ViT on ImageNet-21k \cite{deng2009imagenet}, an image classification dataset with 14M images and 21k classes. 
    For certain cases we also use ImageNet-1k which is a subset of ImageNet-21k with 1.3M images and 1k classes.
\end{itemize}
Beyond T5 and ViT, we also present the results for BERT in the Appendix.

We measure the sparsity level at the intermediate output of the two-layer MLPs in a Transformer. Recall that an MLP performs the following mapping
\begin{equation}\label{eq:def-MLP}
    f(\x; \K, \V) \doteq \sum_{i = 1}^{d_\text{ff}}\Big(\sigma(\langle \k_i, \x \rangle) \cdot \v_i \Big), ~~\text{or equivalently,}~~f(\x; \K, \V) \doteq \V \sigma(\K^\top \x),
\end{equation}
where $\x \in \RR^{d_\text{model}}$ is the input, $\K = [\k_1, \ldots, \k_{d_\text{ff}}] \in \RR^{d_\text{model} \times d_\text{ff}}$ and $\V = [\v_1, \ldots, \v_{d_\text{ff}}] \in \RR^{d_\text{model} \times d_\text{ff}}$ are learnable layer parameters, and $\sigma()$ is a nonlinear activation function. 
We use ReLU as the activation function $\sigma()$ for both T5 and ViT\footnote{ViT originally uses GeLU as its activation function as in \cite{dosovitskiy2020image}. Here we switch to using ReLU as it allows us to more easily measure the sparsity level using the number of nonzero entries with a very small performance drop (e.g., $47.78\%$ with GeLU vs $47.58\%$ with ReLU for Top-1 evaluation accuracy on ImageNet-21k).}. 
A two-layer MLP may be regarded as having $d_\text{ff}$ neurons where the $i$-th neuron performs the computation $\sigma(\langle \k_i, \x \rangle) \cdot \v_i$, and the final layer output is the sum of the output of all neurons. 
Each neuron is called \emph{activated} if $\sigma(\langle \k_i, \x \rangle)$ is strictly positive. 
Hence, the sparsity of neuron activation can be measured by the number of nonzero entries in the feature map 
\begin{equation}\label{eq:def-activation-map}
    \a \doteq \sigma(\K^\top \x),    
\end{equation}
which is a vector of dimension $d_\text{ff}$. 
Throughout the paper, the sparsity level is computed on the training set unless otherwise specified.

Both T5 and ViT come with several configurations for $d_\text{model}, d_\text{ff}$, number of layers, etc.  
Unless specified otherwise, we will use the Base models (i.e., T5-Base and ViT-B/16) which have $d_\text{model} = 768$, $d_\text{ff} = 3072$, and $12$ layers (for ViT) and $12$ encoder layers $+ 12$ decoder layers (for T5). 
Our experiment with T5 uses the T5X codebase \citep{roberts2022t5x}, and our experiment with ViT uses the Scenic codebase \cite{dehghani2022scenic}. 
More training details of T5 and ViT are provided in Appendix~\ref{sec:implementation_details}.

\section{Prevalence of Sparsity in Learned Transformers}
\label{sec:prevalence}

This section shows thorough experiments on commonly used Transformers that sparsity in activation maps is a prevalent phenomenon.
We also show through some controlled experiments that deeper and wider Transformers tend to be sparser measured by percentage of nonzero entries in activation maps.

\begin{figure}[t]
\centering  
\begin{subfigure}[b]{0.32\linewidth}
    \centering
    \includegraphics[width=\textwidth]{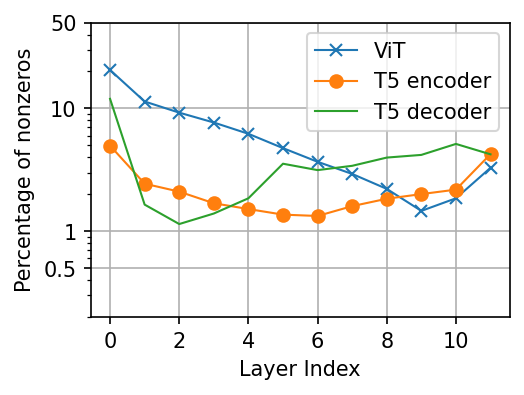}
    \caption{T5 vs ViT}
    \label{fig:sparsity_T5_ViT}
\end{subfigure}
\hfill
\begin{subfigure}[b]{0.32\linewidth}
    \centering
    \includegraphics[width=\textwidth]{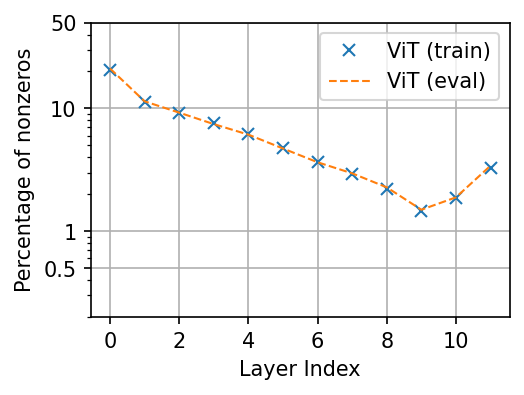}
    \caption{Train vs evaluation data}
    \label{fig:sparsity_ViT_train_test}
\end{subfigure}
\hfill
\begin{subfigure}[b]{0.32\linewidth}
    \centering
    \includegraphics[width=\textwidth]{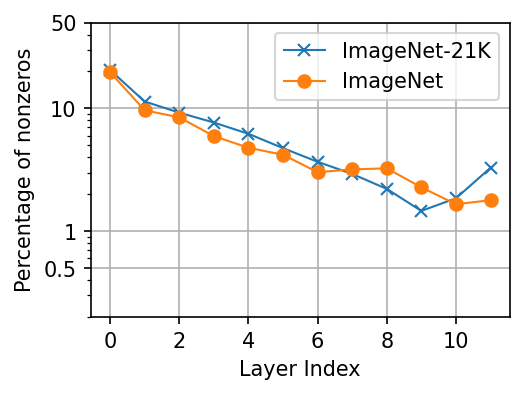}
    \caption{Different training data size}
    \label{fig:sparsity_ViT_datasets}
\end{subfigure}
\hfill
\begin{subfigure}[b]{0.32\linewidth}
    \centering
    \includegraphics[width=\textwidth]{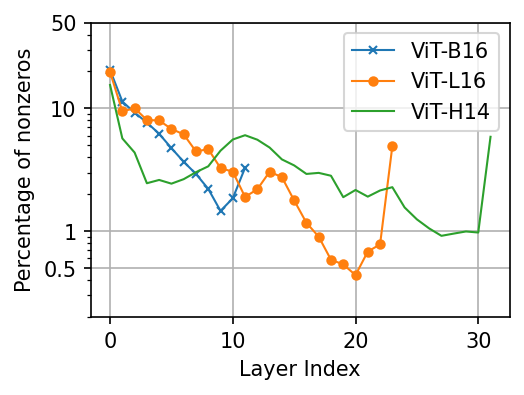}
    \caption{Varying configuration (ViT)}
    \label{fig:sparsity_ViT_config}
\end{subfigure}
\hfill
\begin{subfigure}[b]{0.32\linewidth}
    \centering
    \includegraphics[width=\textwidth]{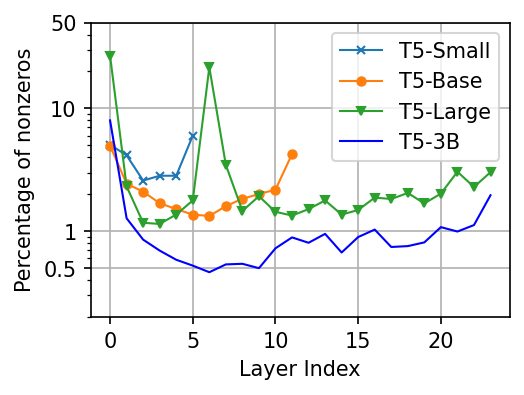}
    \caption{Varying configuration (T5 Encoder)}
    \label{fig:sparsity_ViT_T5_encoder_config}
\end{subfigure}
\hfill
\begin{subfigure}[b]{0.32\linewidth}
    \centering
    \includegraphics[width=\textwidth]{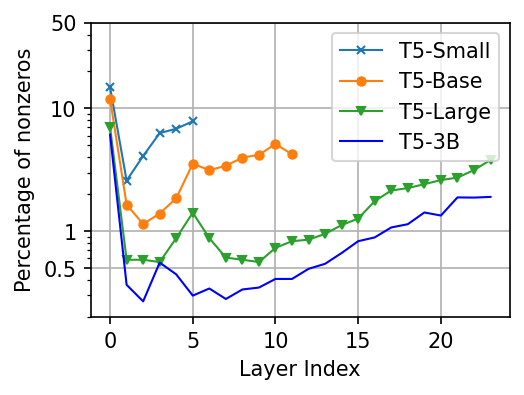}
    \caption{Varying configuration (T5 Decoder)}
    \label{fig:sparsity_ViT_T5_decoder_config}
\end{subfigure}
\caption{Percentage of nonzero entries across different layers of trained Transformers (a) for both language data with T5 and vision data with ViT, (b) on both training and evaluation data, (c) for ViT trained on two ImageNet of different scales (21k vs 1k classes), (d) on ViT of varying configurations, and (e, f) on T5 of varying configurations. Please note that the y-axis is in $\log$ scale. \emph{Sparsity emerges in all cases.}} 
\label{fig:sparsity_is_ubiquitous}
\end{figure}

\subsection{Sparsity is a Ubiquitous Phenomenon}

We start by providing experimental evidence that the emergence of sparse activation in trained Transformers is a ubiquitous phenomenon. 
To this end, we plot the percentage of nonzero entries of activation maps in different Transformers, and present the results in Figure~\ref{fig:sparsity_is_ubiquitous}.
Such results demonstrate the following.

\begin{itemize}[leftmargin=*,topsep=0.1em,itemsep=-0.1em]
    \item \emph{Sparsity emerges for both Vision and NLP tasks.} Figure~\ref{fig:sparsity_T5_ViT} shows the percentage of nonzero entries of trained T5 and ViT models evaluated on their respective training datasets. We see that both encoder and decoder of T5, as well as the ViT, all exhibit sparsity. 
    \item \emph{Sparsity emerges on both training and evaluation data.} Figure~\ref{fig:sparsity_ViT_train_test} shows the percentage of nonzero entries in a trained T5 evaluated on both the training data and the evaluation data. We see that the property of sparsity generalizes very well to evaluation data as the curves for training and evaluation data align very closely with each other.
    \item \emph{Sparsity emerges on datasets of varying scale.} Figure~\ref{fig:sparsity_ViT_datasets} shows the percentage of nonzero entries in ViT trained on both ImageNet-21k and ImageNet-1k, where the former is a superset of the later with approximately $10\times$ more images and $21\times$ more classes. 
    We see that the scale of data does not affect much of the sparsity level. 
    \item \emph{Sparsity emerges on Transformers of varying configurations.} Figure~\ref{fig:sparsity_ViT_config} shows the percentage of nonzero entries for ViT of varying configurations in model size. Figure~\ref{fig:sparsity_ViT_T5_encoder_config} and \ref{fig:sparsity_ViT_T5_decoder_config} show the percentage of nonzero entries for encoder and decoder, respectively, of T5 with varying configurations in model size. We see that sparsity persists for all cases.
    \item \emph{Sparsity emerges across all layers of a Transformer.}  Finally, all plots in Figure~\ref{fig:sparsity_is_ubiquitous} show that sparsity emerges in all layers of a Transformer. Moreover, in all cases the first few and last few layers tend to be denser than intermediate layers.
\end{itemize}

\begin{wrapfigure}{r}{0.35\textwidth}
\centering 
\vspace{-1em}
\includegraphics[width=0.99\linewidth,trim={0cm 0 0cm 0},clip]{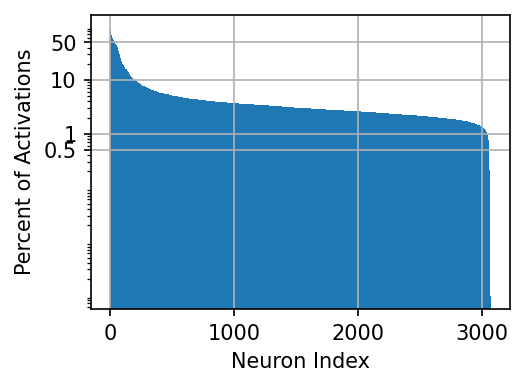}
\vspace{-2em}
\caption{Percentage of times that each neuron in the first MLP layer of a trained T5 is activated on C4 dataset. }
\vspace{-1em}
\label{fig:sparsity_of_individual_neurons}
\end{wrapfigure}
The presence of sparsity in activation maps does not rule out the possibility that a small percentage of the neurons are always activated for all inputs, whereas the rest of the neurons are never activated. 
To illustrate that this is not the case, we experiment with a pretrained T5 base model\footnote{\url{https://github.com/google-research/t5x/blob/main/docs/models.md\#t5-checkpoints}} to plot the percentage of layer inputs for which each of the $d_\text{ff}$ neurons is activated when evaluated on 800 examples taken from C4 dataset with span corruption task. Note that there are {800 * 512 = 409600} samples as MLP activation is computed per token. 
The results are presented in Figure~\ref{fig:sparsity_of_individual_neurons} with x-axis being indices of neurons in the first encoder layer of T5 sorted in descending order according to percentage of layer inputs on which they are activated.
It can be seen that while a few neurons are activated for around 50\% of the time, the vast majority of neurons (around 93.5\%) are activated less than 10\% of the time.
Moreover, there are no dead neurons that are never activated, and the least activated neuron is activated for around 0.001\% of the time, and 99\% of neurons are activated over 1\% of the time. 
Finally, while the results here are for neurons in the first MLP layer of a pretrained T5 base encoder, all other MLP layers show qualitatively similar behavior.

\subsection{The Larger, the Sparser}
\label{sec:larger_sparser}

We next examine the effect of model size on the sparsity level of activation maps. 
Note that Figure~\ref{fig:sparsity_ViT_T5_encoder_config} and Figure~\ref{fig:sparsity_ViT_T5_decoder_config} provide evidence with T5 of varying configuration that larger models tend to be sparser. 
Here we perform controlled experiments to examine the effect of model depth, measured by the number of Transformer layers, as well as the effect of model width, measured by the dimension of activation map of MLPs (i.e., $d_\text{ff}$), separately. 
Towards that, we take a standard T5 model and vary the depth and width, respectively while keeping the rest of the configuration fixed, and examine their sparsity level after training. 
The results are presented in Figure~\ref{fig:the_larger_the_sparser} for the encoder, whereas we omit the results for the decoder as they are qualitatively the same as those for encoder.

It can be seen from Figure~\ref{fig:sparsity_T5_varying_depth} that deeper Transformers are arguably sparser. 
For example, many of the middle layers of the 32-layer model have less than 1\% nonzero entries while all shallower models have more than 1\% nonzero entries across all layers.
For comparing networks of different widths, we measure the sparsity with the percentage and the count of nonzero entries in Figure~\ref{fig:sparsity_T5_varying_width_perc} and Figure~\ref{fig:sparsity_T5_varying_width_num}, respectively. 
It can be seen that wider models have a lower percentage of nonzero entries, though a higher count of nonzero entries.

\begin{figure}[t]
\centering  
\begin{subfigure}[b]{0.32\linewidth}
    \centering
    \includegraphics[width=\textwidth]{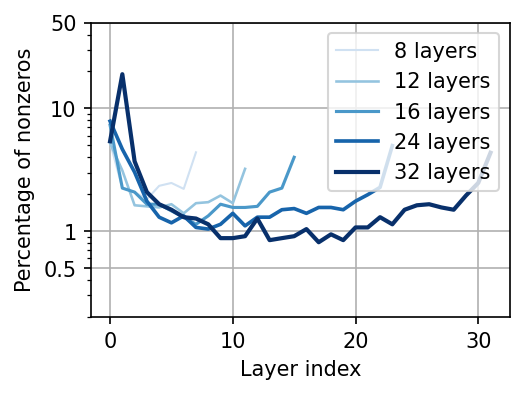}
    \caption{Sparsity vs. depth}
    \label{fig:sparsity_T5_varying_depth}
\end{subfigure}
\hfill
\begin{subfigure}[b]{0.32\linewidth}
    \centering
    \includegraphics[width=\textwidth]{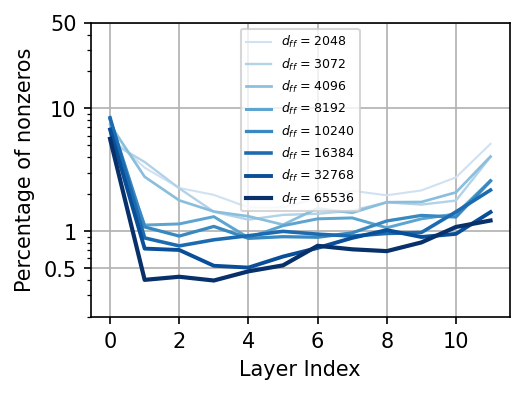}
    \caption{Sparsity (percentage) vs. width}
    \label{fig:sparsity_T5_varying_width_perc}
\end{subfigure}
\hfill
\begin{subfigure}[b]{0.32\linewidth}
    \centering
    \includegraphics[width=\textwidth]{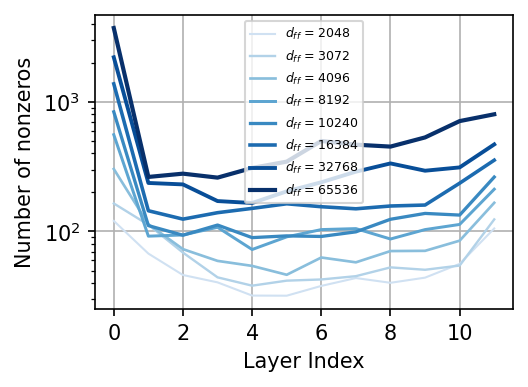}
    \caption{Sparsity (count) vs. width}
    \label{fig:sparsity_T5_varying_width_num}
\end{subfigure}
\caption{Activation sparsity across different encoder layers of trained T5 Transformers of (a) varying depth and (b, c) varying width (i.e., $d_\text{ff}$). Since with varying width the dimension of activation maps also changes, we evaluate sparsity both in term of the percentage (as in (b)) and the count (as in (c)) of nonzeros. \emph{Deeper and wider models are sparser in terms of percentage of activated neurons.}}
\label{fig:the_larger_the_sparser}
\end{figure}

\section{Sparsity from Training Dynamic?}
\label{sec:causes}

In this section we study the causes of sparsity in activation maps of trained Transformers. 
Towards that, in Section~\ref{sec:sparsity_from_labels},~\ref{sec:sparsity_from_data}, and~\ref{sec:sparsity_from_data_fitting} , we present a set of hypotheses and design corresponding experiments to validate or disprove them. 
We discuss the observation from the experiments and draw a tentative conclusion in Section~\ref{sec:causes_discussion} on attributing sparsity to the training dynamic, with theoretical evidence. 

\begin{figure}[t]
\centering  
\begin{subfigure}[b]{0.32\linewidth}
    \centering
    \includegraphics[width=\textwidth]{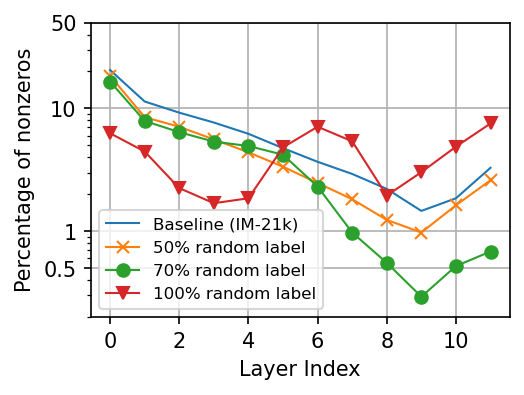}
    \caption{Random label}
    \label{fig:sparsity_random_label}
\end{subfigure}
\hfill
\begin{subfigure}[b]{0.32\linewidth}
    \centering
    \includegraphics[width=\textwidth]{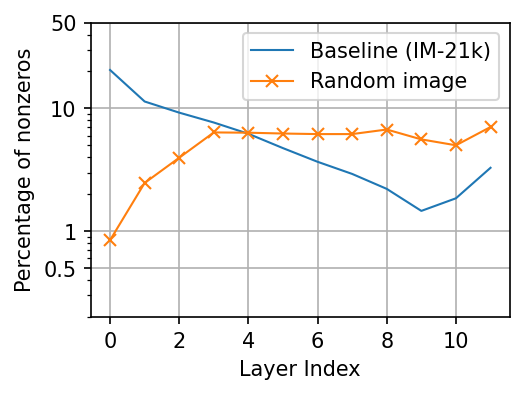}
    \caption{Random image}
    \label{fig:sparsity_random_image}
\end{subfigure}
\hfill
\begin{subfigure}[b]{0.32\linewidth}
    \centering
    \includegraphics[width=\textwidth]{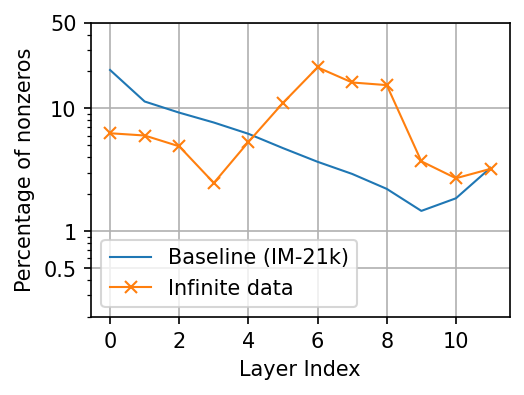}
    \caption{Infinite data}
    \label{fig:sparsity_infinite_data}
\end{subfigure}
\caption{Percentage of nonzero entries in ViT trained on ImageNet-21k (IM-21K) with (a) \emph{random labels} where $p\%$ labels are replaced by labels drawn from a uniform distribution with $p \in \{50\%, 70\%, 100\%\}$, (b) \emph{random images} where each image is replaced by one where the pixels are drawn from i.i.d. uniform distribution in $[-1, 1]$, and (c) \emph{infinite data} where sufficient training data is generated by drawing random image and random label pairs so that the model is never trained on the same pair twice.}
\label{fig:cause_of_sparsity}
\end{figure}

\subsection{Sparsity from Labels?} 
\label{sec:sparsity_from_labels}

Transformers are usually trained via supervised learning (e.g., using the ImageNet dataset for ViT) or self-supervised learning (e.g., using the span corruption task for T5).
In both cases, the training labels provide a pertinent and meaningful description of the corresponding training data (e.g., image for ViT and text for T5). 
Sparsity may arise because the label set provides a structured organization of the massive training data, hence the training dataset admits a compact representation. 
This motivates us to formulate the following hypothesis.

\begin{hypothesis}[Sparsity from labels]
    Sparsity in trained Transformers arises from the labels for Transformer training, e.g., human annotations in supervised training or generated labels from data itself in self-supervised learning.
    \label{hypothesis:sparsity_from_label}
\end{hypothesis}

We use a \emph{random label} experiment with ViT for image classification to test Hypothesis \ref{hypothesis:sparsity_from_label}. 
Specifically, we generate a new training dataset by replacing $p\%$ of the labels in the ImageNet-21k dataset with random labels drawn uniformly at random from the set of all possible labels, where $p$ is varied to examine the effects. 
With such a dataset, the labels for a certain percentage of images do not provide a meaningful description for the content of the image. 
Hence, if Hypothesis \ref{hypothesis:sparsity_from_label} is valid, then the activation map will become dense.

The sparsity level of ViT trained on the random label datasets is shown in Figure~\ref{fig:sparsity_random_label}.
It can be seen that the percentage of activated neurons decreases with an increasing percentage of label noise up to $70\%$. 
An even higher label noise level at $100\%$ changes the sparsity level across layers as the shallow layers (i.e., layers 0 - 4) becomes sparser, while the deep layers (i.e., layers 5 - 11) becomes denser. 
Nonetheless, even with $100\%$ label noise, all layers have $< 10\%$ activated neurons. 

\subsection{Sparsity from Data?} 
\label{sec:sparsity_from_data}

While modern image and text data are often of high-dimensional, their intrinsic degree of freedom is much smaller, i.e., they are low-dimensional and admit compact representations \cite{vidal2015generalized,Wright-Ma-2022}.
Hence, even if the labels do not provide meaningful descriptions of the data, it may still be possible that Transformers extract low-dimensional structures from data and produce compact representations in the form of sparse activation maps. This motivates the following hypothesis.

\begin{hypothesis}[Sparsity from natural data]
    Sparsity in trained Transformers arises from natural training data (e.g., images for ViT and texts for T5).
    \label{hypothesis:sparsity_from_data}
\end{hypothesis}

We use a \emph{random image} experiment to test Hypothesis \ref{hypothesis:sparsity_from_data}. 
With the ImageNet-21k dataset, we replace each image with a random image generated by drawing pixel values from an i.i.d. Uniform distribution in the range of [0, 255], and use these images (instead of the original images in ImageNet-21k) for model training. 
Such random images do not contain any low-dimensional structures nor compact representations.

The percentage of nonzero entries of a ViT trained on random image dataset is shown in Figure~\ref{fig:sparsity_random_image}. 
It can be seen that the first four layers become sparser while the last few layers become relatively denser compared to training with natural images in ImageNet-21k.
Nonetheless, all layers have $< 10\%$ activated neurons. 

\subsection{Sparsity from Data-fitting?} 
\label{sec:sparsity_from_data_fitting}

Modern deep neural networks are often \emph{over-parameterized}, with sufficient capacity to fit practical training datasets and obtain close-to-zero training error. 
There is evidence suggesting that this result holds true even if the data and label are generated in random \cite{zhang2021understanding}.
Hence, there is the possibility that sparsity arises because the training data, even if generated in random, is scarce relative to the scale of modern over-paremeterized models.

\begin{hypothesis}[Sparsity from data-fitting]
    Sparsity in trained Transformers arises from the fact that models have more than sufficient capacity to fit training data of practical scale. 
    \label{hypothesis:sparsity_from_fitting}
\end{hypothesis}

To test Hypothesis \ref{hypothesis:sparsity_from_fitting}, we design an \emph{infinite data} experiment where the amount of training data is infinitely large so that any practical Transformer becomes under-parameterized relative to the data and cannot fit the data. 
The way we generate infinite training data is to sample images with random pixels as in the random image experiment, and for each image we sample a random label as in the random label experiment. 
Moreover, we generate sufficient amount of such training data to make sure that the model never sees the same data point twice during the training. 
The number of training iterations in the infinite data experiment is kept the same as that of the random image and random label experiments.

The result of this experiment is presented in Figure~\ref{fig:sparsity_infinite_data}. 
It can be seen that the first four layers produce sparser activation maps, while middle layers with index 4 - 7 are considerably denser compared to the baseline with near 10\% to 20\% nonzero entries.


\subsection{Discussion: Sparsity from Training Dynamic?}
\label{sec:causes_discussion}

The results of random label, random image, and infinite data experiments in Figure~\ref{fig:cause_of_sparsity} show that labels, data, and data-fitting as conjectured in Hypothesis~\ref{hypothesis:sparsity_from_label}, ~\ref{hypothesis:sparsity_from_data}, and ~\ref{hypothesis:sparsity_from_fitting}, respectively, all affect the sparsity level of the activation map. 
Nonetheless, none of them fully explains the emergence of sparsity since for all results in Figure~\ref{fig:cause_of_sparsity}, the percentage of nonzero entries is considerably smaller than at the initialization (i.e., 50\%).

Our results point to the possibility that sparsity comes from the training dynamic. 
Namely, at early training stage with any training data and a random initialization of network parameters, the descending direction of the gradient on the Transformer parameters tends to point to a regime where their MLPs produce sparse activation maps. 
In the following, we provide theoretical evidence for this argument by looking at the gradient on the positive activation maps for a DNN with last two layers being a ReLU followed by a fully connected layer.
In particular, we have the follow result.

\begin{theorem}\label{thm:gradient-on-activation}
    Let $f(\x ; \V, \boldsymbol\theta): \RR^n \to \RR^K$ be a neural network given by
    \begin{equation}
        f(\x) = \V \sigma \big( \p(\x; \boldsymbol \theta) \big),
    \end{equation}
    where $\V = [\v_1, \ldots, \v_{d_\text{ff}}]\in \RR^{K \times d_\text{ff}}$ is network parameter for the last layer drawn from a random distribution, $\sigma()$ is the ReLU activation function, and $\p(\x; \boldsymbol \theta)$ denotes all other layers with parameter $\boldsymbol \theta$. We write $\p = \p(\x; \boldsymbol \theta)$ for simplicity.
    \begin{itemize}[leftmargin=*]
        \item Consider the mean squared error (MSE) loss $\ell_\text{MSE}(f(\x), \y) \doteq \frac{1}{2}\|f(\x) - \y\|_2^2$, where $\y$ is an arbitrary vector independent of $\V$. 
        Assume that $\V$ satisfies
        \begin{equation}\label{eq:initialization-properties-MSE}
            \expect{ \V } = \0, ~~\text{and}~~
            \expect{ \langle \v_i, \v_j \rangle } 
            \begin{cases}
            = 0 , & ~\text{if}~ i \ne j, \\
            > 0, & ~\text{otherwise\footnotemark}.
            \end{cases}
        \end{equation}
        If there exist an $i^*$ such that $p_{i^*} > 0$, then we have
        \begin{equation}
            \expect{ \frac{\partial \ell_\text{MSE}(f(\x), \y)}{\partial \p_{i^*}}} > 0,
        \end{equation}
        where the expectation is taken with respect to randomness in $\V$.
        
        \item Consider the cross-entropy (CE) loss $\ell_\text{CE}(f(\x), \y) = - \langle \y, \log \frac{\exp(f(\x))}{
        \langle \exp(f(\x)), \1 \rangle} \rangle$, where $\y$ is an arbitrary vector that sums up to one and independent of $\V$. 
        Assume that the entries of $\V$ are drawn from independent distributions, the probability of any entry of $\V$ being 0 is less than 1, and $\expect{ \V } = \0$.
        If there exist an $i^*$ such that $p_{i^*} > 0$, then we have
        \begin{equation}
            \expect{ \frac{\partial \ell_\text{CE}(f(\x), \y)}{\partial \p_{i^*}}} > 0,
        \end{equation}
        where the expectation is taken with respect to randomness in $\V$.
    \end{itemize}
\end{theorem}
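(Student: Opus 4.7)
The plan is to apply the chain rule so that both bullets reduce to estimating the expected inner product of the $i^*$-th column $\v_{i^*}$ of $\V$ with $\partial \ell/\partial f$, and then to exploit the orthogonality/mean-zero assumptions in the MSE case and a monotone correlation inequality in the CE case. Two observations make the reduction clean: $\p(\x; \boldsymbol\theta)$ does not depend on $\V$ (so it is deterministic under the expectation), and $p_{i^*} > 0$ gives $\sigma'(p_{i^*}) = 1$. Both cases therefore start from
\[
    \frac{\partial \ell}{\partial p_{i^*}} \;=\; \Big\langle \frac{\partial \ell}{\partial f}, \v_{i^*} \Big\rangle,
\]
and the contribution $-\langle \y, \v_{i^*}\rangle$ that appears after plugging in $\partial \ell/\partial f$ has zero expectation in both settings because $\y$ is independent of $\V$ and $\expect{\v_{i^*}} = \0$, so it can be discarded.

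\textbf{MSE case.} Using $\partial \ell_\text{MSE}/\partial f = \V\sigma(\p) - \y$, I would expand
\[
    \langle \V\sigma(\p), \v_{i^*}\rangle \;=\; \sigma(p_{i^*})\,\|\v_{i^*}\|_2^2 \;+\; \sum_{j \ne i^*} \sigma(p_j)\,\langle \v_j, \v_{i^*}\rangle.
\]
Under expectation and using that $\sigma(\p)$ is deterministic, every cross term vanishes by the hypothesis $\expect{\langle \v_j, \v_{i^*}\rangle} = 0$ for $j \ne i^*$, leaving $p_{i^*} \cdot \expect{\|\v_{i^*}\|_2^2}$, which is strictly positive by the diagonal part of the assumption and $p_{i^*} > 0$.

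\textbf{CE case.} The softmax gradient is $\s - \y$ with $s_k = e^{f_k}/\sum_l e^{f_l}$, so the task reduces to showing $\expect{\sum_{k=1}^K s_k(\V)\,V_{k, i^*}} > 0$. I would handle each $k$ separately by conditioning on the rest of $\V$, i.e., on every entry other than $V_{k, i^*}$. Under this conditioning, $f_k = u_k + V_{k, i^*}\,p_{i^*}$ for a conditional constant $u_k$, while $f_l$ for $l \ne k$ is a constant, so $s_k$ as a function of $V_{k, i^*}$ alone has the form $w \mapsto C e^{\alpha w}/(B + C e^{\alpha w})$ with $B, C > 0$ and $\alpha = p_{i^*} > 0$, which is strictly increasing in $w$. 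Since the identity $w \mapsto w$ is also strictly increasing, the Chebyshev/FKG correlation inequality for two monotone functions of one real random variable gives
\[
    \expect{V_{k, i^*}\, s_k \,\big|\, \text{rest}} \;\ge\; \expect{V_{k, i^*}} \cdot \expect{s_k \,\big|\, \text{rest}} \;=\; 0,
\]
where independence of $V_{k, i^*}$ from the rest of $\V$ and $\expect{V_{k, i^*}} = 0$ kill the right-hand side. Taking the outer expectation and summing over $k$ yields non-negativity of the quantity of interest.

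\textbf{Main obstacle.} The genuinely delicate step is upgrading $\ge$ to $>$. The correlation inequality is strict when the underlying random variable is non-degenerate and each monotone function is strictly monotone on its support. The hypothesis $\Pr[V_{k, i^*} = 0] < 1$ together with $\expect{V_{k, i^*}} = 0$ forces $V_{k, i^*}$ to place positive mass on both $(-\infty, 0)$ and $(0, \infty)$, which gives non-degeneracy, and $s_k$ is strictly monotone in $V_{k, i^*}$ because $p_{i^*} > 0$. To make the strict inequality uniform across the arbitrary (possibly discrete or asymmetric) mean-zero distributions allowed in the hypothesis, I would invoke the identity $\mathrm{Cov}(f(X), g(X)) = \tfrac{1}{2}\expect{(f(X) - f(X'))(g(X) - g(X'))}$ for an independent copy $X'$; the right-hand side is manifestly positive as soon as $X$ is non-degenerate and $f, g$ are strictly monotone in the same direction, which is the final step needed to close both bullets.
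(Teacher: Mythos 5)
Your proof is correct and follows essentially the same route as the paper's: the chain-rule reduction to $\langle \partial \ell / \partial f, \v_{i^*} \rangle$, the identical expansion and cancellation for the MSE case, and for the CE case the same coordinate-wise decomposition that conditions on all entries of $\V$ except one entry of $\v_{i^*}$ so that the corresponding softmax component becomes a strictly increasing bounded function of that mean-zero entry. The only difference is cosmetic: where you invoke the Chebyshev/FKG correlation inequality together with the symmetrization identity to obtain strict positivity, the paper proves the equivalent one-variable statement as a standalone lemma (Lemma~\ref{thm:expectation-bound}) by splitting the integral at $v = 0$ and comparing the increasing factor to its value at zero.
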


\footnotetext{This requirement is generally satisfied unless the probability of $\v_i = \0$ is 1.}
The proof of Theorem~\ref{thm:gradient-on-activation} is provided in Appendix~\ref{sec:derivation}. 
Theorem~\ref{thm:gradient-on-activation} states that the gradient of either the MSE or CE loss with respect to any positive activation $p_{i^*}$ is positive in expectation.
Hence, any training algorithm based on negative gradient directions tends to reduce the magnitude of such positive activations, which will lead to a smaller training loss.
Here, the expectation is taken with respect to the randomness in the last layer parameter $\V$. 
Hence, our result can be considered as an analysis for DNNs at initialization where weights are often chosen randomly from a fixed distribution. 
In particular, the required properties for the distribution of $\V$ in Theorem~\ref{thm:gradient-on-activation} for both MSE and CE losses are satisfied by commonly used initialization methods, such as the one in \cite{he2015delving}.
On the other hand, Theorem~\ref{thm:gradient-on-activation} does not apply to subsequent training iterations since the label $\y$ is no longer independent of $\V$. 
However, it can be seen empirically from Figure~\ref{fig:T5_sparsity_across_layers} that the trend of a decreasing percentage of nonzero entries persists for a certain number of iterations during the beginning of training until such a percentage reaches a low level and stays relatively stable until the end of training.

\section{Efficient, Robust, and Calibrated: Sparsity is All You Need?}
\label{sec:benefits}

In this section we show that activation sparsity provides several practical benefits. In Section~\ref{sec:efficiency} we discuss how the free sparsity in trained Transformers brings us free computation efficiency in terms of FLOPs count \emph{during inference}. 
In order to obtain sparsity hence FLOPs reduction \emph{throughout training}, in Section~\ref{sec:top-k_transformer} we introduce Top-$k$ Transformers, a simple modification of Transformers where a top-$k$ thresholding operation is applied to the activation maps in all MLPs. 
While existing hardware cannot well support sparse computation and fully realize the benefit of FLOPs reduction, we provide a proof-of-concept experiment on preliminary benefits of Top-$k$ Transformer. 
Finally, in Section~\ref{sec:robustness_calibration} we show that sparsity in activation is a good regularization for Transformers. Namely, enforcing sparser activation with smaller values of $k$ in Top-$k$ Transformer (without any other hacks, tweaks and hyperparameter tuning) bestows Transformers several desired properties, namely, robustness of training with erroneous annotations, less sensitivity to input noise/perturbation, and better confidence calibration of the predictions.

\subsection{Efficiency for Free}
\label{sec:efficiency}

Given an embedding dimension $d_\text{model}$ and an MLP intermediate dimension $d_\text{ff}$, the computational complexity of a Transformer for an input sequence of length $N$ is $\mathcal{O}(Nd_\text{model}^2 + N^2 d_\text{model} + Nd_\text{model}d_\text{ff})$, where the first term comes from computing the key, query, and value matrices, the second term comes from computing the self-attention matrix, and the third term comes from the MLP.
For a fixed sequence length $N$, and considering the fact that $d_\text{ff}$ is often much larger than $d_\text{model}$, it is arguable that MLP poses the computational bottleneck in large Transformers. 
In the following, we explain how sparsity in activation map of MLP can be leveraged to significantly reduce its computational cost, without affecting the model performance. 

\myparagraph{Efficiency for the Second MLP Layer.}
The sparse activation immediately suggests that a lot of the computation for inference with Transformers is not needed at all.
That is, while doing dense matrix-matrix multiplications, much of it is about multiplying a vector by a value of zero, which can be avoided to save computation.

Specifically, we consider the second layer of the MLP in \eqref{eq:def-MLP} which performs the computation
\begin{equation}\label{eq:second_MLP_layer}
    \V \a,
\end{equation}
where $\a \in \RR^{d_\text{ff}}$ is the intermediate activation map of MLP (see~\eqref{eq:def-activation-map}) and $\V \in \RR^{d_\text{model} \times d_\text{ff}}$ is the layer parameter. 
Eq. \eqref{eq:second_MLP_layer} involves a simple matrix-vector multiplication which has a FLOP count of $2d_\text{model} \times d_\text{ff}$.
However, if $\a$ is sparse with, say $s$ nonzero entries, then the FLOP count for \eqref{eq:second_MLP_layer} reduces to $2d_\text{model} \times s$.
Hence, 
\begin{quote}
\centering
    FLOP in the second MLP layer is reduced by a factor of $1 - \frac{s}{d_\text{ff}}$. 
\end{quote}
Note that $\frac{s}{d_\text{ff}}$ is exactly the percentage of nonzeros plotted in the y-axis of e.g. Figure~\ref{fig:T5_sparsity_across_layers}, which is $2.7\%$ averaged across all layers. 
Hence, the computational cost of the second MLP layer can be reduced by a  significant amount. 
More excitingly, the reduction factor $1 - \frac{s}{d_\text{ff}}$ is likely to be even bigger for larger Transformer models (see Figures~\ref{fig:sparsity_T5_varying_depth} and ~\ref{fig:sparsity_T5_varying_width_perc}), pointing to a greater reduction in computation. 

\myparagraph{Efficiency for the First MLP Layer.}
The sparsity in the intermediate activation map of MLP does not immediately suggest a reduction in computation for the first MLP layer. 
Nonetheless, it is possible to significantly reduce the computation in the first MLP layer by leveraging approximate nearest neighbor search, which we explain next. 

Recall from \eqref{eq:def-MLP} that the computation in the first MLP layer is given by
\begin{equation}\label{eq:first_MLP_layer}
    \sigma(\K^\top \x),
\end{equation}
with $\K = [\k_1, \ldots, \k_{d_\text{ff}}] \in \RR^{d_\text{model} \times d_\text{ff}}$ being the layer parameter and $\x$ being the layer input.
If the output is sparse with $k$ nonzero entries, then the calculation in \eqref{eq:first_MLP_layer} may be formulated as finding $k$ points from the set $\{\k_i\}_{i=1}^{d_\text{ff}}$ that are ``closest'' to the input $\x$ measured by values of inner product.
Such a problem is well-known as the nearest neighbor search (NNS) problem or the maximum inner product search problem. 
While naive solving of the NNS problem has linear complexity in $d_\text{ff}$, there exists \emph{approximate} algorithms \cite{shrivastava2014asymmetric,johnson2019billion,guo2020accelerating,chern2022tpu} that are of sublinear complexity, and using them in Transformers means that
\begin{quote}
\centering
    FLOP in the first MLP layer may be reduced to have sublinear complexity in $d_\text{ff}$.
\end{quote}
There are of course the questions of whether such approximate NNS algorithms hurt Transformer performance or not, which we leave for future study.


\subsection{Sparsity in Training via Top-$k$ Transformers}
\label{sec:top-k_transformer}

The benefits in terms of efficiency discussed in Section~\ref{sec:efficiency} comes with caveats.
First, while the activation maps are sparse on average, there is the possibility that some of the activation maps for certain inputs are denser hence cannot benefit from sparse computation. Second, sparsity occurs only in trained Transformers while the computation is dense during and particularly at the beginning of training. 

Here we present Top-$k$ Transformer, a simple modification to Transformer architecture that allows us to control sparsity level for all model inputs, and throughout training. 
Top-$k$ Transformer is built upon a regular Transformer with the only modification being the MLP layers, where at the output of the activation function $\sigma()$ (see~\eqref{eq:def-MLP}) we add a Top-$k$ thresholding operator. 
That is, the MLPs of Top-$k$ Transformers perform the following computation
\begin{equation}
   f(\x; \K, \V) = \V \cdot \text{Top}_k\Big(\sigma(\K^T \x)\Big), 
\end{equation}
where $\text{Top}_k(\cdot)$ performs a thresholding that all entries other than those of the largest $k$ values are set to zero with $k$ being a hyper-parameter subject to design choices.
Note that Top-$k$ Transformer reduces to a regular Transformer if we set $k = d_\text{ff}$.
By using a small value of $k$, the benefit of efficiency {in terms of reduction in FLOP as discussed} in Section~\ref{sec:efficiency} 
{applies to} Transformer training as well. 

\begin{figure}[t]
\centering  
\begin{subfigure}[b]{0.55\linewidth}
    \centering
    \includegraphics[width=\textwidth]{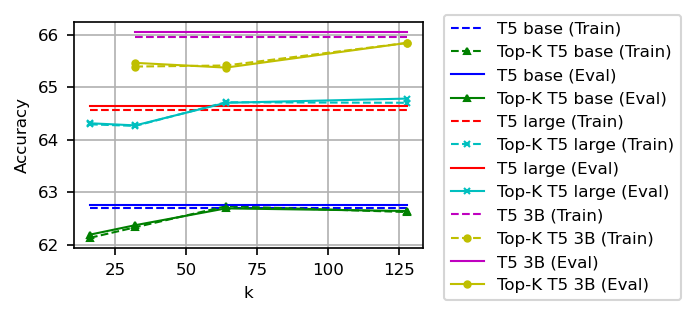}
    \caption{T5}
    \label{fig:topk_t5_accuracy}
\end{subfigure}
~
\begin{subfigure}[b]{0.35\linewidth}
    \centering
    \includegraphics[width=\textwidth]{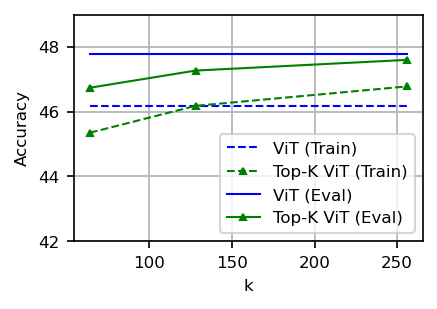}
    \caption{ViT}
    \label{fig:topk_vit_accuracy}
\end{subfigure}
\caption{Training and evaluation accuracy of Top-$k$ T5 for three different sizes: base, large and 3B (left) and Top-$k$ ViT (right) with varying $k$. \emph{Top-$k$ Transformer is on par with regular Transformer for a large enough $k$}. E.g. for T5 3B with $k = 128$, and ViT with $k = 256$, the drop is around $0.3\%$.}
\label{fig:topk_accuracy}
\end{figure}

\begin{wrapfigure}{r}{0.35\textwidth}
\centering 
\vspace{-1.2em}
\includegraphics[width=0.99\linewidth,trim={0cm 0 0cm 0},clip]{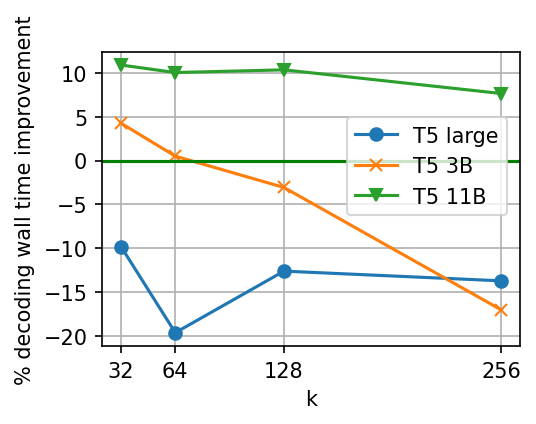}
\vspace{-2em}
\caption{Latency reduction for unbatched greedy decoding in decoder of Top-$k$ Transformers on TPUv4.}
\vspace{-1em}
\label{fig:walltime}
\end{wrapfigure}
The immediate question for Top-$k$ Transformer is whether it offers training sparsity at the cost of a reduced performance. 
Here we conduct experiments with Top-$k$ T5 and Top-$k$ ViT, and evaluate their performance measured by prediction accuracy for C4 span corruption and ImageNet-21k classification tasks, respectively.
The results are provided in Figure~\ref{fig:topk_accuracy}. 
We see that with the Top-$k$ T5 (resp., Top-$k$ ViT) Transformer, taking $k$ to be 64 (resp., 256) is sufficient for closely matching the test performance of the vanilla T5 (resp., ViT).
Note that this is achieved without any other hyper-parameter tuning for the Top-$k$ Transformers upon those used for a regular Transformer, and it is possible that other hyper-parameter choices may further improve the performance of Top-$k$ Transformers.

We now provide experimental results with Top-$k$ Transformers on wall-time benefits from FLOPs reduction discussed in Section~\ref{sec:efficiency}. 
In particular, we evaluate the inference time latency reduction of Top-$k$ Transformer. 
In our experiment, we add a Top-$k$ thresholding to T5X~\citep{roberts2022t5x}\footnote{We use the implementation of \texttt{jax.lax.approx\_max\_k}~\citep{chern2022tpu} with a recall target of 0.95.}.
We gain efficiency in the second MLP layer by an implementation that avoids all multiplication by zero as described in Section~\ref{sec:efficiency}. 
The decoder per-token wall time for unbatched greedy decoding during inference on a single TPUv4 chip is presented in Figure~\ref{fig:walltime}.
We observe that larger models have more wall time reduction, due to the fact that they have larger $d_{\text{ff}}$ hence more FLOPs reduction. 
In particular, for T5-11B we observe around $10\%$ wall time reduction with $k \le 128$, though this amount becomes smaller with a larger $k = 256$.

Finally, we emphasize that the sparsity in Top-$k$ Transformers is \emph{unstructured} and data-dependent, which is not well supported on existing computation hardwares such as TPUs and GPUs.
Hence, the results in Figure~\ref{fig:walltime} are for proof-of-concept purposes, and is far from fully realizing the benefit of sparsity. 
We leave a study of better implementation of sparse computation for obtaining wall time reduction to future work. 


\subsection{Bonus! Improved Robustness and Calibration}
\label{sec:robustness_calibration}

\begin{table}[t]
\centering
\caption{Evaluation of Top-128 ViT for ImageNet-1k classification in terms of 1) natural accuracy with ImageNet-1k evaluation set, 2) robust accuracy with \{40\%, 80\%\} corrupted training labels, 3) robust accuracy under input perturbation with additive \{Gaussian, Impulse, Shot\} noise on evaluation images, and 4) calibration error on evaluation data measured by ECE.  \emph{Top-128 ViT is on par with ViT for natural accuracy while is significantly better for model robustness and calibration.}}
\begin{tabular}{l|c|cc|ccc|c}
    \toprule
    Methods & \begin{tabular}{@{}c@{}}Natural \\Accuracy\end{tabular} & \multicolumn{2}{c|}{\begin{tabular}{@{}c@{}}Accuracy w/ \\ Train Label Noise\end{tabular}}  & \multicolumn{3}{c|}{\begin{tabular}{@{}c@{}}Accuracy under \\ Input Perturbation\end{tabular}}  & \begin{tabular}{@{}c@{}}Expected Calibration \\Error (ECE)\end{tabular} \\
    & & 40\% & 80\% & Gaussian & Impulse & Shot & \\
    \midrule
    ViT & 74.85\% & 59.44\% & 25.35\% & 39.54\% & 37.37\% & 38.56\% & 8.42\% \\
    \midrule
    Top-128 ViT & 74.83\% & 62.13\% & 30.80\% & 42.29\% & 40.07\% & 40.68\% & 7.48\% \\
     \bottomrule
\end{tabular}
\label{tab:improved_robustness_and_calibration}
\end{table}

Despite not being explicitly designed for such purposes, inducing sparse activation via Top-$k$ Transformer has the benefits of improving model robustness\footnote{This is previously demonstrated in \cite{ahmad2019can} for ConvNets. } and confidence calibration.
We demonstrate this using the image classification task with the ImageNet-1k dataset, and present a snapshot of key results in Table~\ref{tab:improved_robustness_and_calibration}. 
All results for Top-$k$ ViT are obtained without any model and training hyper-parameter tuning upon those for ViT. 
Contexts, details, and full results are presented below.

\myparagraph{Robustness to Label Noise. }
An important challenge for DNNs is that they are highly susceptible to label noise, the problem where a certain percentage of training labels are corrupted or erroneously generated.
This may be attributed to the fact that DNNs are often over-parameterized, hence too ``capable'' that they tend to overfit, or ``memorize'' the noisy labels without generalizing to test data.
While many dedicated techniques exist (see e.g., \cite{algan2021image,song2022learning} for a review), here we show that a simple Top-$k$ Transformer with a small value of $k$ can effectively address the label noise issue. 


\begin{wrapfigure}{r}{0.43\textwidth}
\centering 
\includegraphics[width=0.99\linewidth,trim={0cm 0 0cm 0},clip]{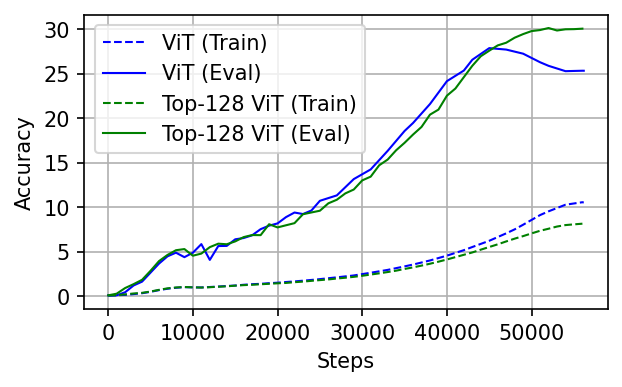}
\vspace{-2em}
\caption{Training curves of Top-128 ViT under 80\% label noise for ImageNet-1k classification. 
}
\vspace{-1em}
\label{fig:labelnoise}
\end{wrapfigure}
We conduct experiments using the ImageNet-1k dataset for which we replace $p\%$ of the labels in the training set with a random label drawn uniformly from the set of all possible labels.
In Figure~\ref{fig:labelnoise} we present the training and evaluation accuracy curves for $p = 80\%$.
It can be seen that the vanilla ViT starts to overfit at around 45,000 steps where the training accuracy continues to increase but the evaluation accuracy starts to drop. 
In contrast, the evaluation performance of Top-128 ViT continues to improve until the end of training, which leads to a better final performance. 
The final evaluation performance under $p \in \{40\%, 80\%\}$ label noise is presented in Table~\ref{tab:improved_robustness_and_calibration}.
It shows that Top-$k$ offers a consistent performance gain with label noise.


\begin{wrapfigure}{r}{0.43\textwidth}
\centering 
\includegraphics[width=0.99\linewidth,trim={0cm 0 0cm 0},clip]{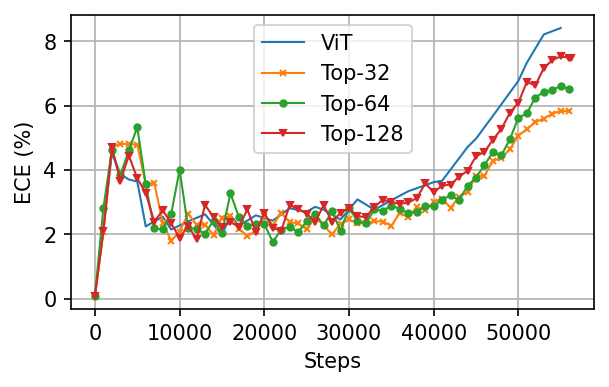}
\vspace{-2em}
\caption{Confidence calibration of Top-$k$ ViT for ImageNet-1k classification.}
\vspace{-1em}
\label{fig:calibration}
\end{wrapfigure}
\myparagraph{Confidence Calibration. }
Aside from the label noise issue, another symptom of over-parameterization of DNNs is that they tend to be overly confident in their predictions.
In the context of classification problems, they tend to assign a high (i.e., close to 1) probability to the class of its prediction, while it is more desirable that they produce a probability that is commensurate with its confidence level \cite{guo2017calibration}. 
A commonly used metric for confidence calibration is the expected calibration error (ECE) \cite{naeini2015obtaining}, which is the discrepancy between the probability to the class of a model's prediction and the probability that its prediction is actually correct. 

Here we measure the calibration of Top-$k$ ViT via ECE and report the results in Figure~\ref{fig:calibration}.
At the beginning of training the model has a low ECE because the output probabilities are mostly uniformly distributed across all classes hence the model is not confident, and that its prediction is purely random hence wrong with high probability. 
The model tends to become overly confident as the training progresses, hence the ECE increases particularly towards the end of training. 
What we can observe is that Top-$k$ enables the Transformer to be more calibrated when compared to a vanilla Transformer, particularly for small values of k.
The results with $k=128$ and its comparison with the vanilla ViT is also presented in Table~\ref{tab:improved_robustness_and_calibration}.

\begin{figure}[t]
\centering  
\begin{subfigure}[b]{0.32\linewidth}
    \centering
    \includegraphics[width=\textwidth]{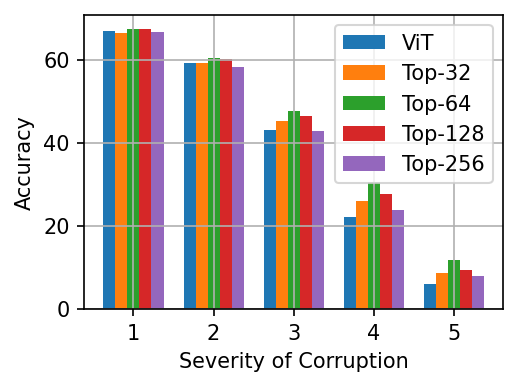}
    \caption{Gaussian Noise}
    \label{fig:gaussian_noise}
\end{subfigure}
\hfill
\begin{subfigure}[b]{0.32\linewidth}
    \centering
    \includegraphics[width=\textwidth]{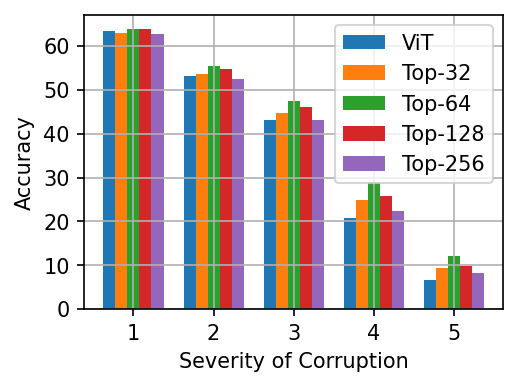}
    \caption{Impulse Noise}
    \label{fig:impulse_noise}
\end{subfigure}
\hfill
\begin{subfigure}[b]{0.32\linewidth}
    \centering
    \includegraphics[width=\textwidth]{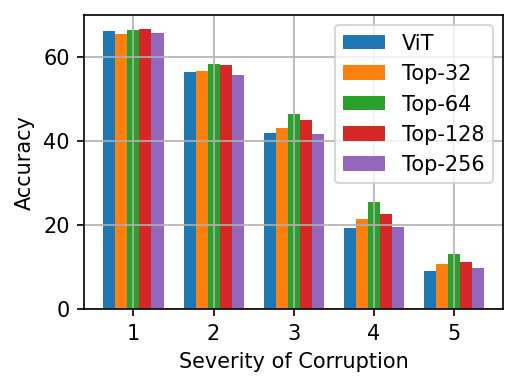}
    \caption{Shot Noise}
    \label{fig:shot_noise}
\end{subfigure}
\caption{Performance of Top-$k$ ViT on corrupted ImageNet-1k test data with Gaussian noise (left), impulse noise (middle), and shot noise (right), each under five severity levels. 
\emph\newchange{Top-$k$ improves robustness for all noise types and on all corruption levels with a suitable choice of $k$.}
}
\label{fig:input_perturbation}
\end{figure}

\myparagraph{Robustness to Input Perturbation.}
Another important challenge with DNNs is that their outputs tend to be sensitive to naturally occurring image corruptions, which limits their application to mission critical tasks \cite{bhojanapalli2021understanding}. 
Here we evaluate the robustness of Top-$k$ ViT to three types of additive noises, namely Gaussian noise, impulse noise, and shot noise.
For that purpose, we train Top-$k$ ViT on standard ImageNet-1k training data and report their classification accuracy on ImageNet-C \cite{hendrycks2018benchmarking}, a benchmark that contains algorithmically generated Gaussian, impulse, and shot noise (among many others types) applied to the ImageNet-1k test dataset.
For each noise type, there are five severity levels for which the results are all presented in Figure~\ref{fig:input_perturbation}. 
We can see that Top-$k$ ViT offers a performance gain over the vanilla ViT, particular with $k = 64$ or $k = 128$ and the severity level is high. 
We also report the averaged performance over all severity levels of each corruption type in Table~\ref{tab:improved_robustness_and_calibration}.

\section{Related Work}


Prior efforts on introducing sparsity in deep neural networks abound, though often with diverse motivations and objectives. Here we provide a brief overview of several popular lines of work.

\myparagraph{Sparsity for Efficiency.} Due to the high computational cost of dense matrix operations in standard DNNs, a plethora of work has explored sparsity in either model weights or activation maps for improving training and inference efficiency (see e.g. \cite{hoefler2021sparsity} for a review). 
For activation sparsity in particular, sparsity for efficiency is explored perhaps first in fully connected and convolutional neural networks \cite{davis2013low,rhu2018compressing, georgiadis2019accelerating,cao2019seernet,chen2020slide,kurtz2020inducing} before subsequently becomes 
a key design component in many of the largest Transformer based language and vision models \cite{jaszczur2021sparse,fedus2022switch,du2022glam,rajbhandari2022deepspeed,fedus2022review}. 
The Top-$k$ thresholding that we use in Top-$k$ Transformer has also been previously used in \cite{gupta2021memory} to improve memory efficiency of Transformers.
However, it has been unclear \emph{a priori} whether sparsity hurts model performance, hence the practice often relies on wishful design, trial-and-error, and post-hot justification \cite{baykal2022theoretical}. 
Our discovery that Transformers naturally produce sparse activation maps, and that larger models are even sparser, may provide principled perspectives towards efficiently training future large models.

\myparagraph{Sparsity for Robustness.} 
Many works find that smaller networks obtained by model compression are more robust to adversarial perturbation \cite{guo2018sparse,jordao2021effect,chen2022sparsity} and label noise \cite{xue2022superior}.
Another line of work that uses sparsity for robustness leverages the property that practical data corruption is often sparse \cite{ghosh2017robust,you2020robust,liu2022robust}.
None of the work mentioned above is based on sparsity in activation maps. 
More closely related to ours is the work \cite{ahmad2019can} where sparsity in activation map of convolutional DNNs is shown to improve robustness to input perturbation, and the work \cite{muthukumar2022adversarial} that leverages sparse activation to derive robust generalization error bounds.

\myparagraph{Sparsity for Explainability.} 
Work on leveraging sparsity for interpreting deep learning models long exist but often in a post-hoc fashion for examining the semantic meanings encoded by a neuron of a trained model \cite{dalvi2019one,mcgrath2021acquisition}. 
For Transformers, evidence suggests that the learned knowledge is encoded mainly in its MLPs with individual neurons expressing specific factual knowledge \cite{dai2022knowledge}.
Moreover, enforcing neuron activation sparsity in MLPs helps to improve the percentage of neurons that are interpretable \cite{elhage2022solu}. 
Hence, our discovery may point to new directions towards developing more interpretable DNNs \citep{cuadros2022self,sajjad2021neuron}.

\myparagraph{Sparsity for Data Modeling.} Following the seminal work \cite{olshausen1996emergence} on the emergence of V1 receptive fields in primary visual cortex from sparse encoding of natural images, there are a lot of interests in sparsity as an effective modeling of natural signals  \cite{mairal2014sparse}.
With the popularity of deep learning techniques and the close resemblance of the computational structure of ReLU networks and sparse encoding algorithms \cite{gregor2010learning}, it became natural to study a DNN as a multi-layer sparse modeling of the data \cite{papyan2018theoretical}. 
Along with substantial theoretical understanding of such a modeling are obtained \citep{papyan2017convolutional,sulam2018multilayer}, there are also experimental results on their practical benefits \citep{sun2018supervised,dai2022revisiting} though less often on modern large-scale data. 
Our discovery on the sparsity of Transformers offers evidence in favor of the perspective that deep neural networks perform sparse modeling of data, and may motivate the design of more practical sparse modeling in the future.


\myparagraph{Sparsity for Theory of Over-parameterized Models.}
Because of its simplicity and well-develped theory in classical machine learning \cite{candes2008introduction,vidal2015generalized,Wright-Ma-2022}, sparse modeling is often used to provide theoretical understanding of modern large and over-parameterized models.
This include works on implicit regularization \cite{vaskevicius2019implicit,zhao2019implicit,woodworth2020kernel,chou2021more,nacson2022implicit}, nonconvex optimization \cite{buhai2020empirical,sulam2022recovery}, noise interpolators \cite{chinot2022robustness,koehler2021uniform,donhauser2022fast}, parallel networks \cite{ergen2021path,zhang2022deep}, etc.
However, the aforementioned work uses sparsity as a testbed or toy model to gain insights, without implication of existence of sparsity in DNNs. 
Exceptions include concurrent work \cite{andriushchenko2022sgd,yang2023sharper} which provide theoretical analysis for when sparse activation emerges. 
Finally, related work also include those on \emph{neural collapse} \cite{papyan2020prevalence,fang2021exploring,zhu2021geometric,wojtowytsch2020emergence,tirer2022extended,poggio2020explicit,thrampoulidis2022imbalance,sukenik2023deep}, showing that the last layer features in a trained classification neural network are approximately 1-sparse after an appropriate rotation.

\section{Discussion: Transformers are Parsimonious Models?}

As the scale of deep learning models continues to grow, it may have been taken for granted that increasing model size is necessary and possibly even sufficient for obtaining ever-improving performance. 
However, historically the futuristic picture for large scale DNNs had not been entirely optimistic.
For example, Geman et al. in 1992 \cite{geman1992neural} made an argument based on the notion of bias-variance trade-off that despite their excellent goodness-of-fit to training data, DNNs may suffer from a high prediction variance hence poor generalization performance.
While recent evidence suggests that DNNs exhibit an unexpected unimodal shaped variance curve \cite{yang2020rethinking} where variance is controlled in the over-parameterized regime by an implicit algorithmic regularization, it is largely unclear whether such a regularization is pertinent and accounts for the good generalization of practical DNNs.

The emergence of sparse activation in Transformer models discussed in Section~\ref{sec:prevalence} (see also Appendix~\ref{sec:additional_results_prevalence}) may offer an explanation for why DNNs work well and do not overfit. 
The notion of sparsity pertains to the law of parsimony, a.k.a. \emph{Occam's razor}, where among all possible explanations of observed data, the \emph{simplest} ones are preferred. 
As a fundamental scientific principle, the law of parsimony is broadly used in various scientific and engineering subjects \cite{epstein1984principle,domingos1999role}, including classical machine learning \cite{tibshirani1996regression}.
However, it is often not a critical design component in DNNs
and goes against the recent deep learning practice of training increasingly flexible, redundant, and powerful models. 
Hence, our discovery may be rather surprising. 
Even not explicitly designed so, Transformers only use a small fraction of its parameters to parse a given input, hence may be regarded as parsimonious models as well.
As discussed in Section~\ref{sec:causes_discussion}, sparsity may arise from the dynamic of neural network training rather than any explicit sparsity regularizer. 
More importantly, evidence of improved robustness and calibration in Section~\ref{sec:robustness_calibration} indicates that sparsity is a pertinent prior for its good generalization. 


\myparagraph{Outlook.}
The emergence of sparsity and its many practical benefits point to sparsity and the law of parsimony as a fundamental component of more powerful models in the future\footnote{This view resonates with recent work on the pathway to artificial intelligence \cite{barham2022pathways,roberts2021why,lecun2022path,vasudevan2021off,ma2022principles}.}.
Along that line, our work is merely a starting point and may have brought more challenges than it has solved. 
While enforcing sparsity via Top-$k$ thresholding demonstrates benefits, it is used as a proof-of-concept due to its simplicity and is not meant to be the best way of inducing sparsity. 
The hope is that our results may motivate future study of introducing sparsity in DNNs in a more principled way. 
Moreover, while sparsity readily implies a drastically reduced computational cost in terms of FLOP count, the benefit may not be reflected by wall time since existing deep learning platforms are often geared towards efficient dense computation. 
Hence, our result may motivate the design of future platforms that excel at sparse computation.
Finally, while our motivation of studying sparse activation in Transformers comes (partly) from study of biological brains, establishing such a connection may reciprocally benefits efforts on applying artificial intelligence to the study of biology and neuroscience \cite{richards2022application}.


\section*{Acknowledgments}
We would like to acknowledge helpful discussions with Ren\'e Vidal and Jeremias Sulam from Johns Hopkins University, with Weijie Su from UPenn, with Yuxiang Wang from UC Santa Barbara, with Atlas Wang from UT Austin, with Nishanth Dikkala, Nikhil Vyas, Preston McAfee and Mukund Sundararajan from Google, with Subutai Ahmad from Numenta, with Wei Hu, Salar Fattahi, and Jianhao Ma from University of Michigan, with Tuo Zhao from Georgia Tech.
We particularly thank Donhauser Konstantin from ETH Zurich for interesting discussion on hypothesis for emergence of sparsity.

{\small
\bibliographystyle{unsrt}
\bibliography{biblio/references}
}

\newpage

\appendices

\numberwithin{equation}{section}
\numberwithin{figure}{section}
\numberwithin{table}{section}

The appendices are organized as follows. 
In Section~\ref{sec:implementation_details} we provide the implementation details for experiments conducted in this paper.
In Section~\ref{sec:additional_results_prevalence} we demonstrate the emergence of sparse activation in other architectures and with other optimizers than those used in Section~\ref{sec:prevalence}.
In Section~\ref{sec:additional_results_benefits} we provide additional experiments upon those in Section~\ref{sec:benefits} to demonstrate the benefits of sparsity.
In Section~\ref{sec:derivation} we provide the proof to Theorem~\ref{thm:gradient-on-activation}.
In Section~\ref{sec:mlp_insights} we present insights on the emergence of activation sparsity from experiments on two-layer MLP models.
Finally, in Section~\ref{sec:faq} we provide answers to frequently asked questions.

\section{Implementation Details}
\label{sec:implementation_details}

\subsection{T5}

Unless specified otherwise, we use vanilla T5 architecture \cite{raffel2020exploring}. We train the models with dropout of 0.1, Adafactor optimizer, and an inverse square root learning rate schedule. For the first 10,000 steps we also use a fixed learning rate of 0.01 as warm-up. The training task is span corruption without any mixture, and unless specified otherwise, we train the model for 100,000 steps with batch size of 256 to save compute and time, as the sparsity or accuracy trend is already clear by then. We use 512 tokens on the encoder side and 114 tokens on the decoder side.

\subsection{ViT}

Following \cite{dosovitskiy2020image}, we train ViT using ADAM \cite{kingma2015adam} as the optimizer with $\beta_1 = 0.9, \beta_2 = 0.999$.
Other training details such as weight decay, dropout rate, and learning rate all follow the description in \cite[Section B.1]{dosovitskiy2020image} except that we train for 180 epochs (as opposed to 300) on ImageNet-1k.


\subsection{T5 / ViT Configurations}

For the reader's convenience, we summarize the configuration of varying T5 / ViT models used in our paper in Table~\ref{tab:t5_vit_config}.

\begin{table}[h]
\centering
\caption{Configuration of T5 and ViT that are used in the experiments. $d_\text{model}$ and $d_\text{ff}$ are defined in Section~\ref{sec:experimental-setup}. \# Layers is the number of encoder + decoder layers for T5 and encoder layers for ViT.}
\vspace{-0.5em}
\begin{tabular}{l|ccccc|ccc}
    \toprule
         & \multicolumn{5}{c|}{T5}  & \multicolumn{3}{c}{ViT} \\
         & Small & Base & Large & 3B & 11B & Base & Large & Huge\\
    \midrule
    $d_\text{model}$ & 512 & 768 & 1024 & 1024 & 1024 & 768 & 1024 & 1280\\
    \midrule
    $d_\text{ff}$ & 2048 & 3072 & 4096 & 16384 & 65536 & 3072 & 4096 & 5120 \\
    \midrule
    \# Layers & 6 + 6 & 12 + 12 & 24 + 24 & 24 + 24 & 24 + 24 & 12 & 24 & 32\\
    \midrule
    \# Parameters & 60M & 220M & 770M & 2,800M & 11,000M & 86M & 307M & 632M\\
    \bottomrule
\end{tabular}
\label{tab:t5_vit_config}
\end{table}

\section{Additional Results On Prevalence of Sparsity}
\label{sec:additional_results_prevalence}

\begin{figure}[!h]
\centering  
\begin{subfigure}{0.21\textwidth}
    \includegraphics[width=\textwidth]{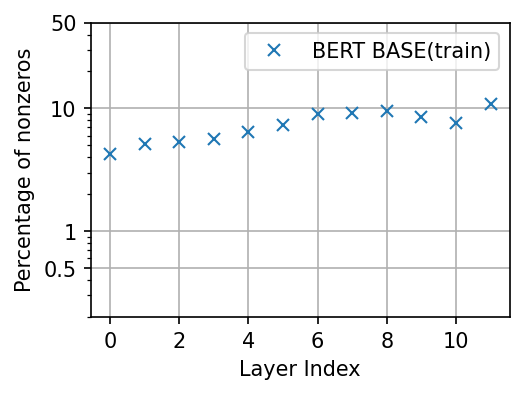}
      \caption{BERT Base}
      \label{fig:sparsity_bertbase}
\end{subfigure}
~
\begin{subfigure}{0.21\textwidth}
    \includegraphics[width=\textwidth]{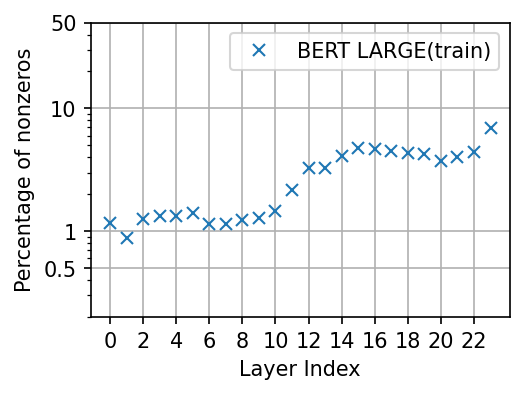}
      \caption{BERT Large}
      \label{fig:sparsity_bertlarge}
\end{subfigure}
~
\begin{subfigure}{0.25\textwidth}
    \includegraphics[width=\textwidth]{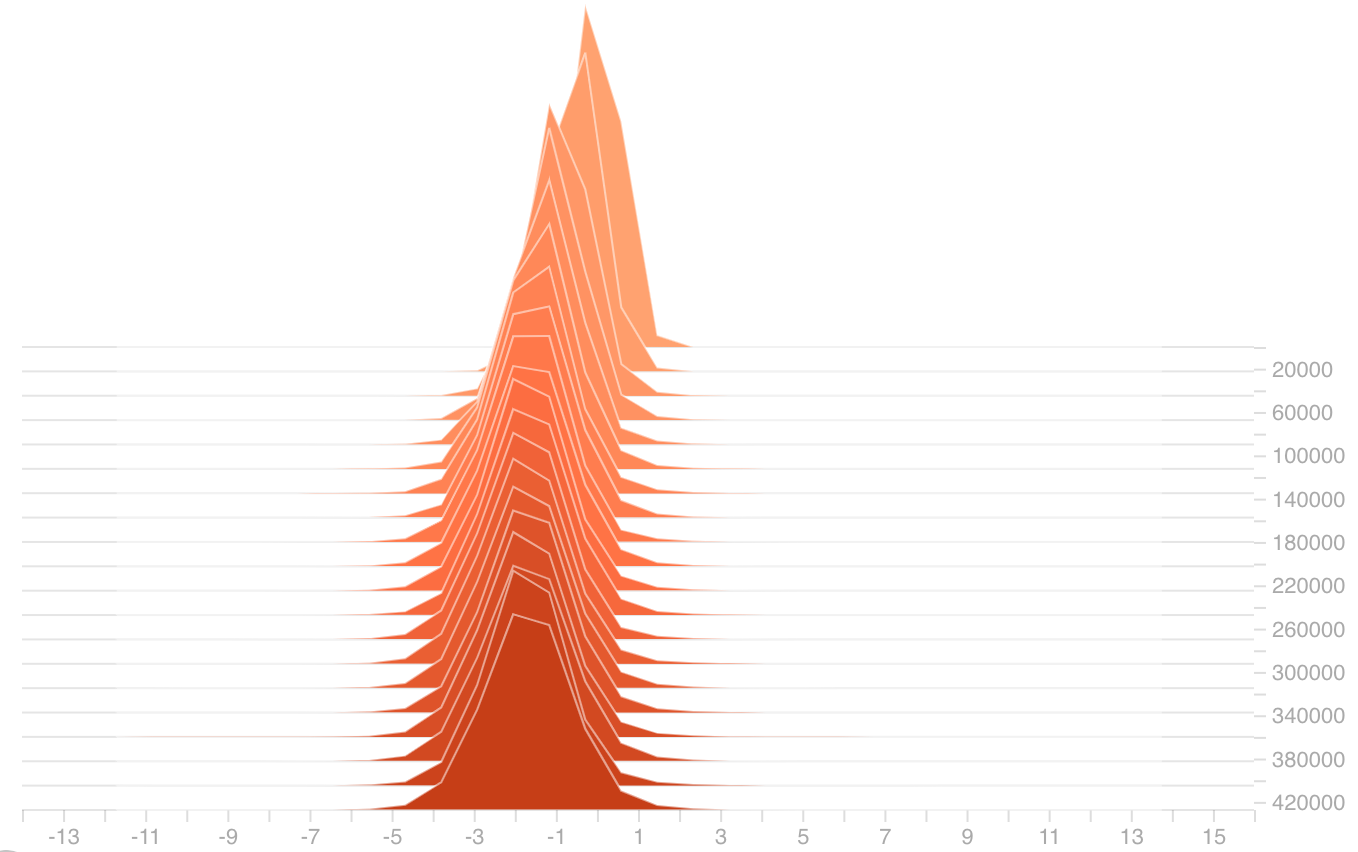}
      \caption{Pre-activations: Layer 1}
      \label{fig:preact_layer1}
\end{subfigure}
~
\begin{subfigure}{0.25\textwidth}
    \includegraphics[width=\textwidth]{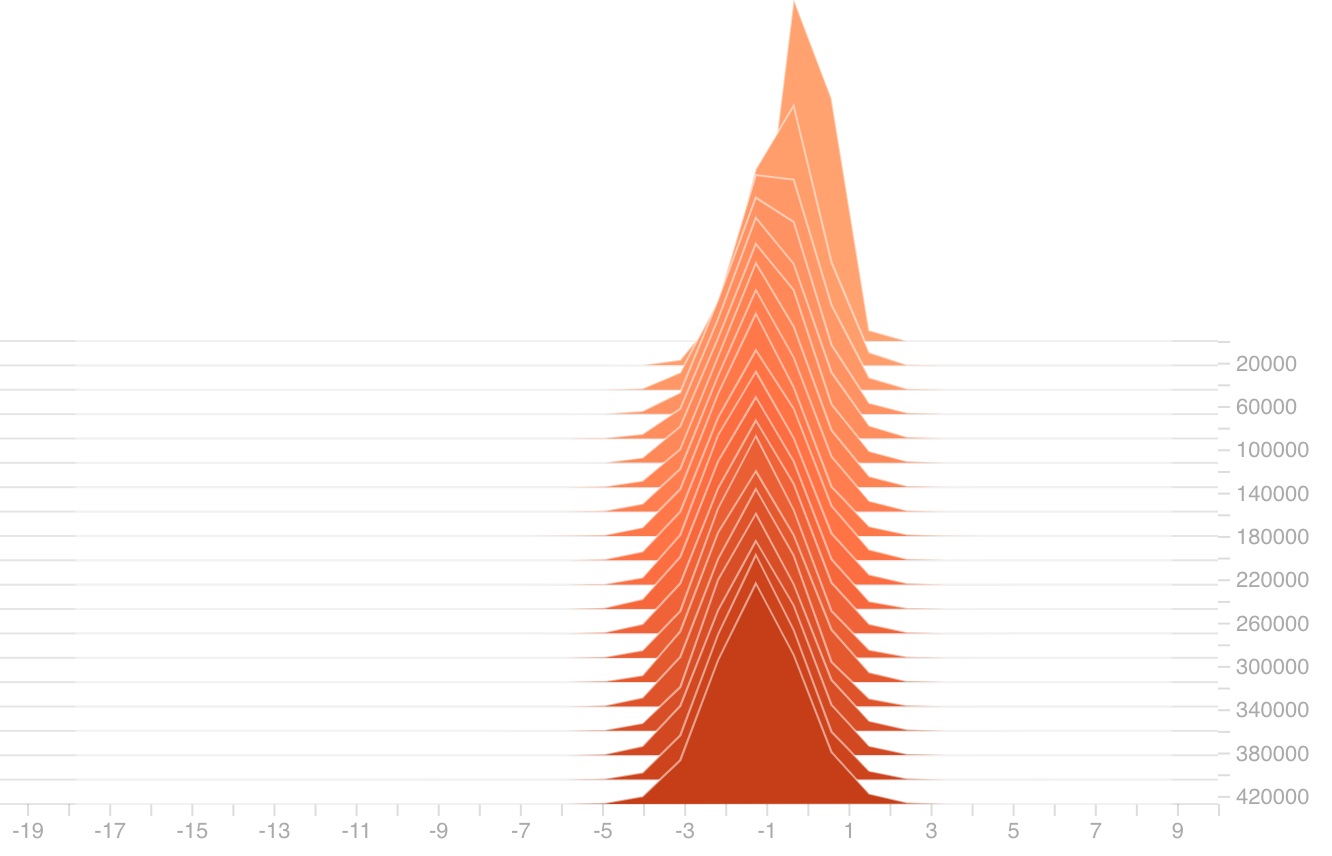}
      \caption{Pre-activations: Layer 12\!\!}
      \label{fig:preact_layer12}
\end{subfigure}
\caption{Plots a, b: Percentage of nonzero entries in activation maps of BERT Base and Large models~\citep{devlin2019bert} trained on Wikipedia dataset. We observe high levels of sparsity (<10\%) similar to other Transformer models. Plots c, d: Histograms of pre-activation values for layers 1 and 12 of a Bert Base model. We notice that while at initialization the activations are distributed with mean 0, the mean quickly shifts negative as the training progresses, resulting in high levels of sparse activation values.}
\label{fig:additional_sparsity_bert}
\end{figure}

\begin{figure}[t]
\centering  
\begin{subfigure}{0.25\textwidth}
    \includegraphics[width=\textwidth]{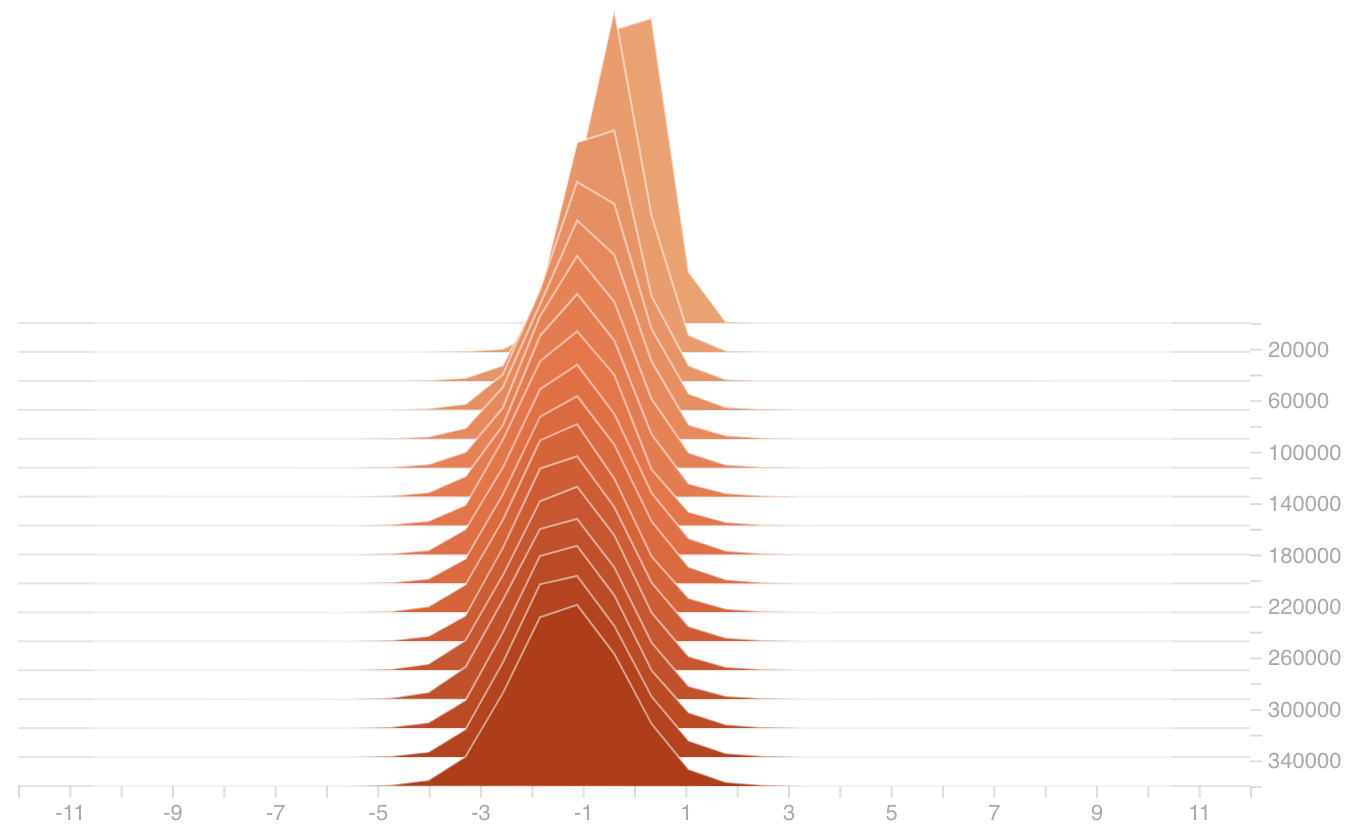}
      \caption{GeLU Activation}
      \label{fig:gelu}
\end{subfigure}
~
\begin{subfigure}{0.25\textwidth}
    \includegraphics[width=\textwidth]{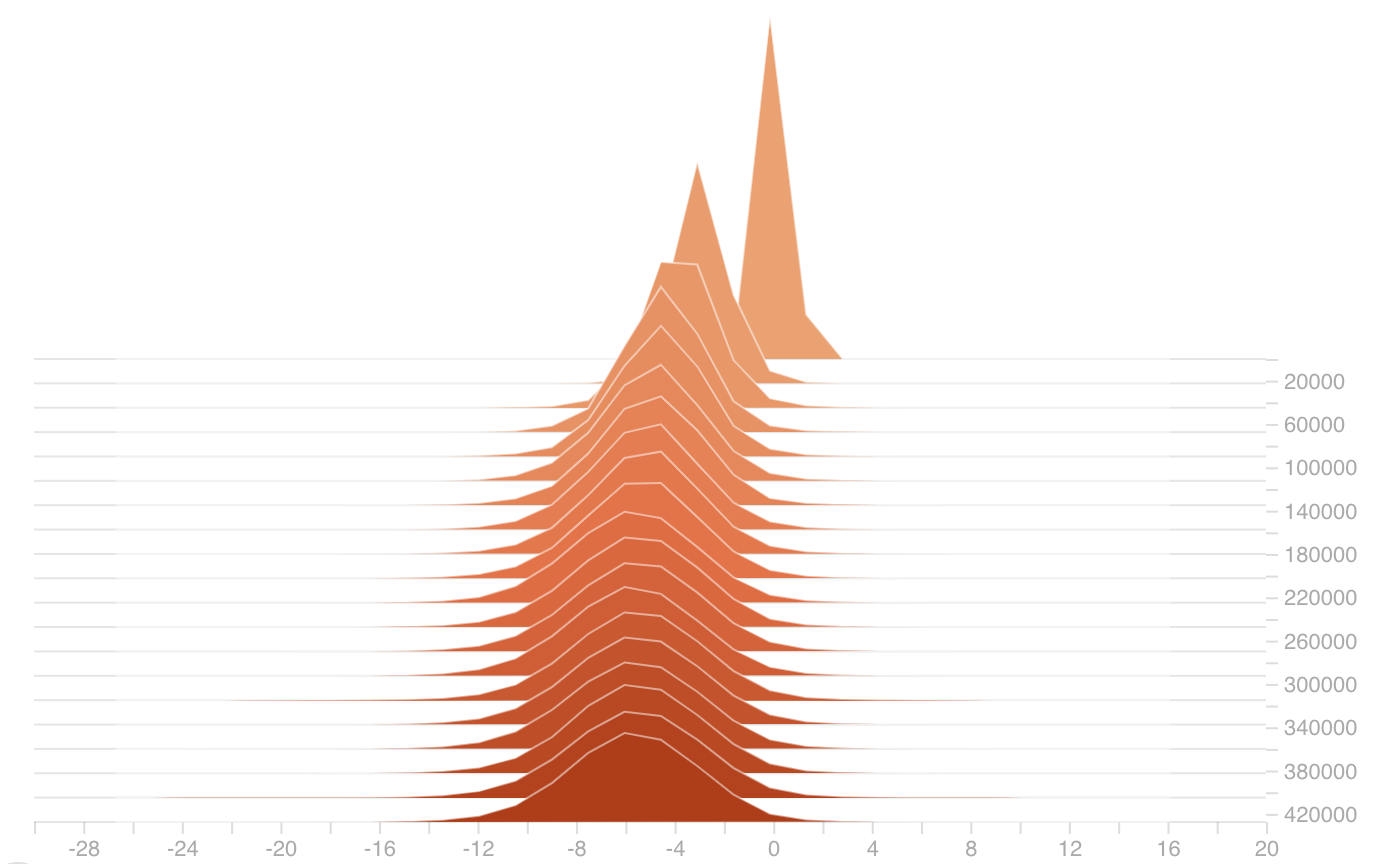}
      \caption{Sigmoid Activation}
      \label{fig:sigmoid}
\end{subfigure}
~
\begin{subfigure}{0.25\textwidth}
    \includegraphics[width=\textwidth]{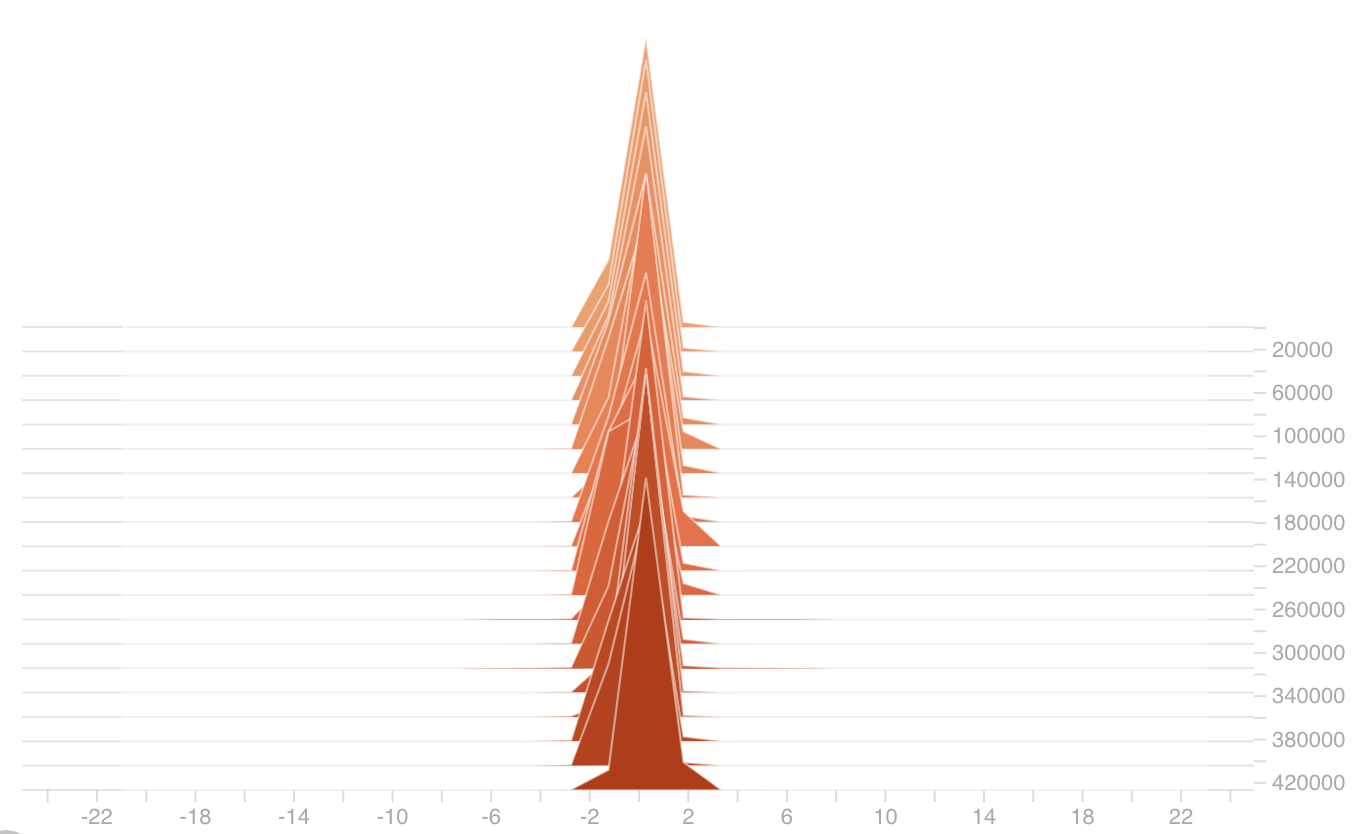}
      \caption{Tanh Activation}
      \label{fig:tanh}
\end{subfigure}

\caption{Layer 1 Preactivation histograms for BERT Base models with different activation functions. We observe similar behavior as ReLU with GeLU and Sigmoid activations. However Tanh activation has different distribution of preactivation values. The network doesn't show sparsity and the accuracy is also worse in comparison to ReLU/GeLU.}
\label{fig:activations_bert}
\end{figure}

\subsection{Sparsity and Network Architecture}
\label{sec:sparsity_and_architecture}

We evaluate the sparsity level of activation map in several commonly used network architectures beyond T5 and ViT.
This includes BERT which is also a Transformer based architecture, as well as non-Transformer based architectures such as MLP-Mixer and ConvNets. 
We also examine whether residual connection accounts for the emergence of sparsity.

\myparagraph{BERT.} We evaluate the sparsity level of BERT models \citep{devlin2019bert}. We specifically consider BERT Base (12 layers) and BERT Large (24 layers) Transformer models, with ReLU activation in the MLP layers. We follow the same training receipe as \cite{devlin2019bert} and pre-train these models on Wikipedia and Books dataset using Masked Language Modelling (MLM) objective. We train for 450000 steps with a batch size of 1024 using AdamW optimizer with $1e-4$ learning rate. 

\begin{wrapfigure}{r}{0.35\textwidth}
\centering 
\includegraphics[width=0.99\linewidth,trim={0cm 0 0cm 0},clip]{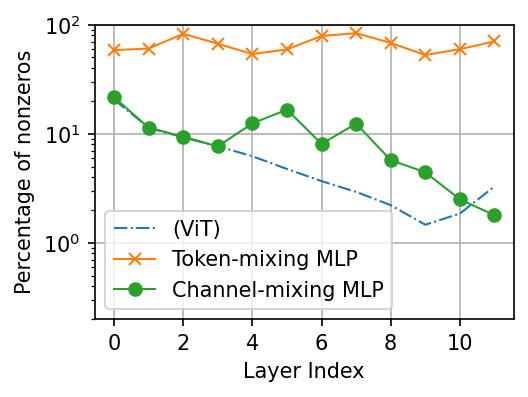}
\vspace{-2em}
\caption{Percentage of nonzero entries in activation maps of MLP-Mixer trained on ImageNet-21k. Results for token-mixing and channel-mixing MLPs are plotted in separate curves. 
}
\vspace{-1em}
\label{fig:sparsity_MLPMixer}
\end{wrapfigure}
In Figure~\ref{fig:additional_sparsity_bert} we plot the sparsity levels of both BERT models for all the intermediate MLP layers (plots a and b). We observe that both these models exhibit high levels of sparsity (< 10\%) as other Transformer models. We further visualize the pre-activation values of the MLP layers as histograms in plots c and d. We observe that while they have mean 0 at initialization, the mean quickly becomes negative as training progresses, resulting in high sparsity levels.

Finally, in Figure~\ref{fig:activations_bert} we provide a visualization of pre-activation of values with several popular activation functions, including GeLU, Sigmoid, and Tanh. We observe a similar distribution of preactivation values as ReLU with GeLU and Sigmoid activations. However Tanh activation has a different distribution of preactivation values. With Tanh activation, the network does not show sparsity and the accuracy is significantly worse in comparison to ReLU/GeLU.

\begin{figure}[t]
\centering  
\begin{subfigure}{0.35\textwidth}
    \includegraphics[width=\textwidth]{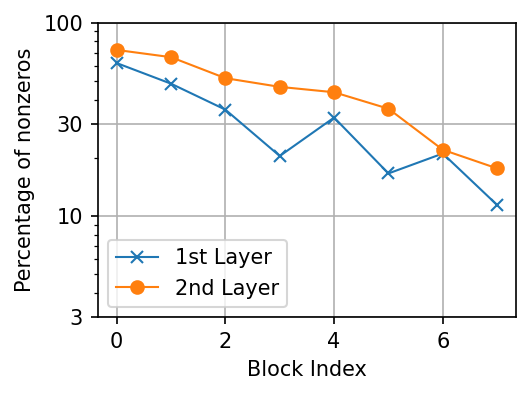}
      \caption{ResNet-18}
      \label{fig:sparsity_ResNet18}
\end{subfigure}
~~~~~~
\begin{subfigure}{0.35\textwidth}
    \includegraphics[width=\textwidth]{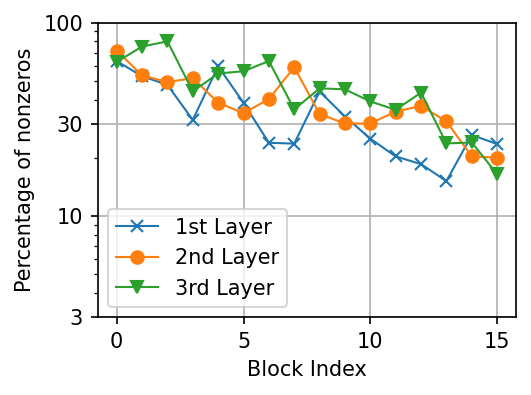}
      \caption{ResNet-50}
      \label{fig:sparsity_ResNet50}
\end{subfigure}
\caption{Percentage of nonzero entries in activation maps of ResNet-18 and ResNet-50 trained on ImageNet-1k. Results for the two (resp., three) layers in each residual block (resp., bottleneck residual block) of ResNet-18 (resp., ResNet-50) are plotted in separate curves. }
\label{fig:additional_sparsity}
\end{figure}

\begin{figure}[t]
\centering  
\begin{subfigure}{0.35\textwidth}
    \includegraphics[width=\textwidth]{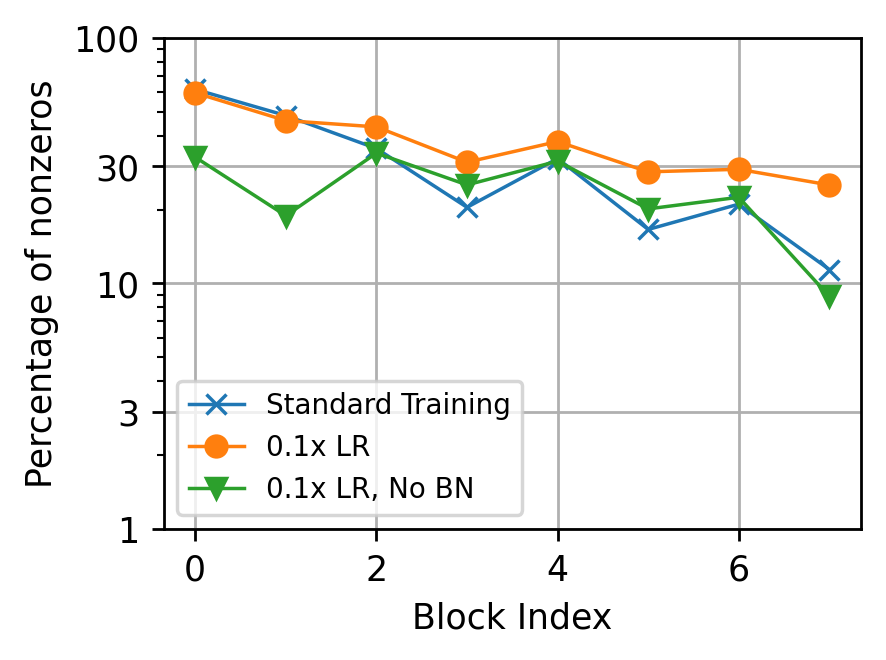}
      \caption{1st Layers}
      \label{fig:sparsity_ResNet18_nobn_first}
\end{subfigure}
~~~~~~
\begin{subfigure}{0.35\textwidth}
    \includegraphics[width=\textwidth]{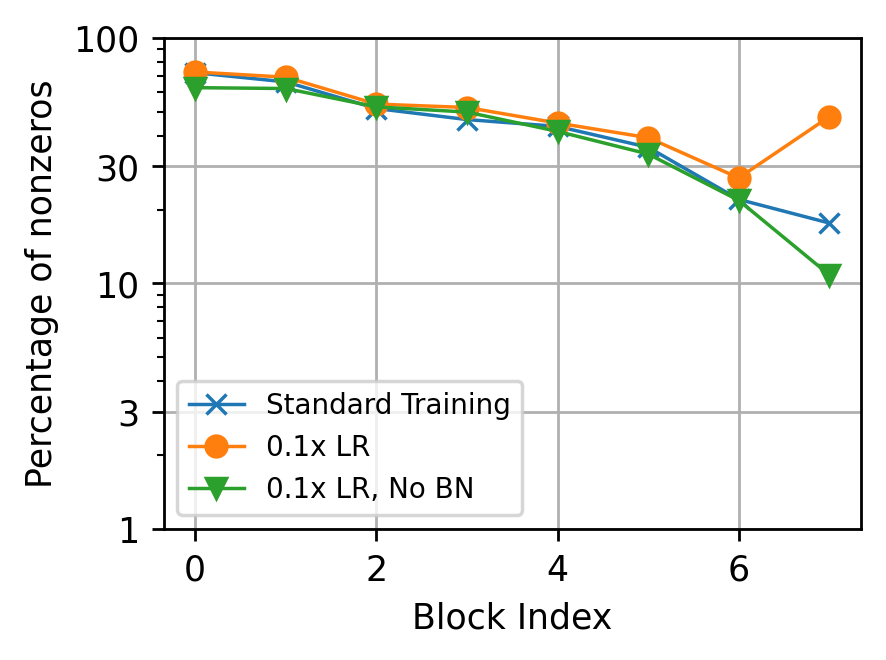}
      \caption{2nd Layers}
      \label{fig:sparsity_ResNet18_nobn_second}
\end{subfigure}
\caption{Effect of batch normalization (BN) on sparsity level across layers of ResNet-18. Because ResNets cannot be effectively train without BN, we reduce the learning rate (LR) by a factor of 10, and multiply the residual branch by a trainable scalar initialized at 0.}
\label{fig:sparsity_ResNet18_nobn}
\end{figure}

\begin{figure}[t]
\centering  
\begin{subfigure}{0.35\textwidth}
    \includegraphics[width=\textwidth]{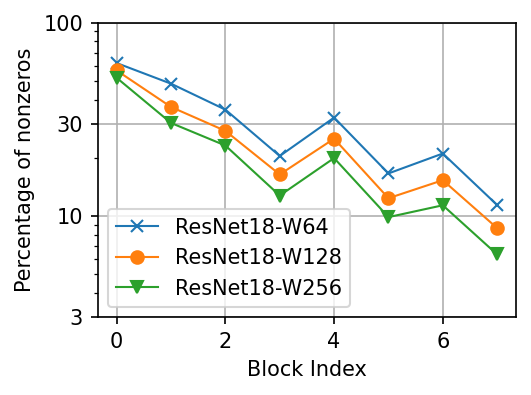}
      \caption{1st Layers}
      \label{fig:sparsity_ResNet18_varying_width_first}
\end{subfigure}
~~~~~~
\begin{subfigure}{0.35\textwidth}
    \includegraphics[width=\textwidth]{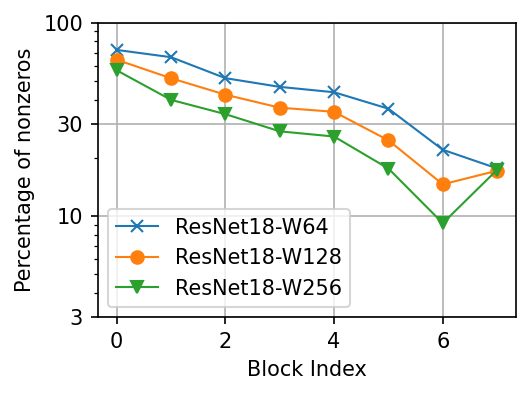}
      \caption{2nd Layers}
      \label{fig:sparsity_ResNet18_varying_width_second}
\end{subfigure}
\caption{Effect of network width $\in \{64, 128, 256\}$ on sparsity level across layers of ResNet-18.}
\label{fig:sparsity_ResNet18_varying_width}
\end{figure}

\begin{figure*}[!t]
    \centering
    \includegraphics[width=0.9\textwidth]{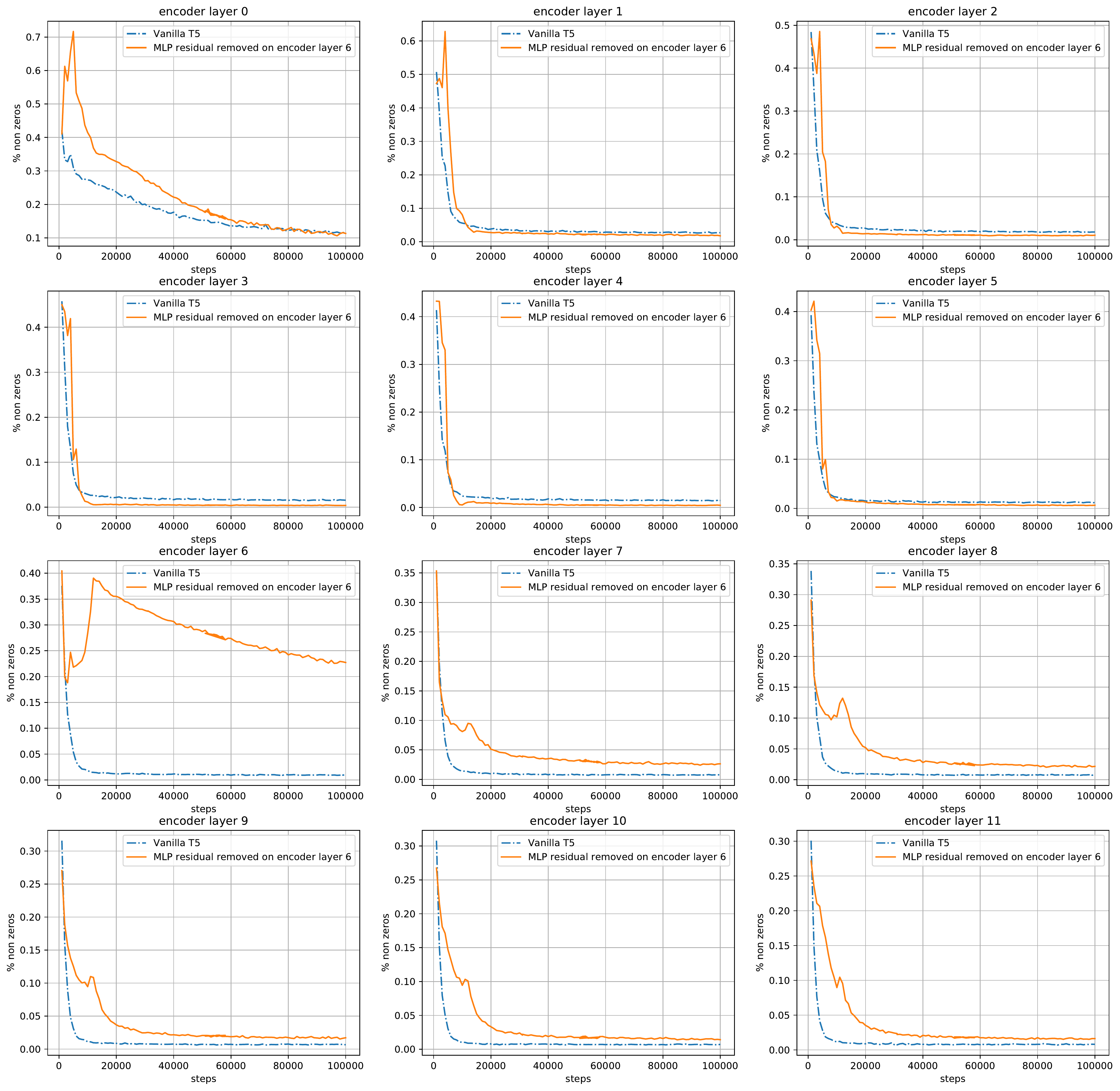}
    \caption{
    Percentage of nonzero entries in activation maps in vanilla T5-Large and in a T5-Large with the residual connection parallel to MLP removed in the $7\textsuperscript{th}$ encoder layer (i.e., \texttt{encoder layer 6}).
    Different subplots correspond to different \emph{encoder} layers (see Figure~\ref{fig:residual_sparsity_dec} for results on \emph{decoder} layers).
    The \texttt{encoder layer 6}, which has its residual connection removed, shows a significant difference in both sparsity and the trend of sparsity during training. 
    Sparsity level in other layers changes from vanilla T5-Large as well, though to a smaller extent.}
    \label{fig:residual_sparsity_enc}
\end{figure*}

\myparagraph{MLP-Mixer.} We evaluate the sparsity level of the MLP-Mixer \cite{tolstikhin2021mlp}, an all-MLP architecture constructed from cascading token-mixing and channel-mixing MLPs. 
Specifically, we use Mixer-B16 as the architecture, ADAM with $\beta_1 = 0.9, \beta_2 = 0.999$ as the optimizer, and train on ImageNet-21k for 300 epochs. 
While \cite{tolstikhin2021mlp} sweeps over a product set of hyper-parameters, here for simplicity we use a fixed set of hyper-parameters with weight decay of 0.03, gradient norm clipping at 1.0, base learning rate of 0.003, RandAugment magnitude of 10, no mixup, no stochastic depth, and no dropout. 

Figure~\ref{fig:sparsity_MLPMixer} shows the sparsity level at the intermediate layer of both token mixing and channel mixing MLPs of Mixer-B16. 
We also plot the sparsity level of ViT (i.e., the plot in Figure~\ref{fig:sparsity_T5_ViT}) to Figure~\ref{fig:sparsity_MLPMixer} for a comparison.
It can be seen that the first four layers of channel mixing MLP and ViT have almost identical sparsity levels, while the rest of the layers (other than the last one) of channel mixing MLP are denser than the corresponding layer of ViT.
On the other hand, the token mixing MLPs produce dense activation maps with more than 50\% nonzero entries, probably because the dimension of the activation maps (384) is too small.

\myparagraph{Convolutional Neural Network (ConvNet).} 
Sparsity in activation maps has been studied for ConvNets such as the AlexNet \cite{krizhevsky2017imagenet} at least as early as in the work of \cite{rhu2018compressing}.
There are also follow-up work \cite{georgiadis2019accelerating,kurtz2020inducing} on how enforcing sparse activation maps can help to gain computation efficiency. 
For completeness, we evaluate and present results for the sparsity level of residual networks (ResNets) \cite{he2016deep}, which is one of the most commonly used ConvNets, trained on ImageNet-1k. 
In particular, we focus on ResNet-18 and ResNet-50 which are constructed from stacking 8 standard residual blocks and 16 ``bottleneck'' residual blocks, respectively, where each block has two and three convolutional and ReLU layers, respectively. 
We examine the sparsity of activation maps after each of the ReLU layers in each residual block.

The results for ResNet-18 and ResNet-50 are reported in Figure~\ref{fig:sparsity_ResNet18} and Figure~\ref{fig:sparsity_ResNet50}, respectively. 
Here, the x-axis is the index of the residual block, and the sparsity of different layers in the residual blocks are plotted with separated curves in each figure.
It can be observed that 
\begin{itemize}[leftmargin=*,topsep=0.1em,itemsep=-0.1em]
    \item Layers near the network output tend to produce sparser activation maps than layers near the network input. This is aligned with the observation with ViT trained on ImageNet-1k (see Figure~\ref{fig:sparsity_ViT_train_test}).
    \item For each residual block, the intermediate layers (i.e., the 1st layer for ResNet-18 and the 1st \& 2nd layers for ResNet-50) produce sparser activation maps than the output layer (i.e., 2nd layer for ResNet-18 and 3rd layer for ResNet-50).
\end{itemize}
In addition, all residual blocks are divided into four stages that have different output feature map sizes. 
For ResNet-50, the four stages are composed of blocks 0 - 2, 3 - 6, 7 - 12, and 13 - 15. 
Figure~\ref{fig:sparsity_ResNet50} shows that there are patterns on how sparsity level varies within each stage and across the boundary of the stages. 
\begin{itemize}[leftmargin=*,topsep=0.2em,itemsep=-0.1em]
    \item For the 1st layers, percentage of nonzeros decreases within each stage, and jumps up from the last layer of each stage to the first layer of next stage.
    \item For the 2nd layers, percentage of nonzeros decreases quickly at the beginning of each stage then becomes stable. 
    \item For the 3rd layers, percentage of nonzeros tend to increase slightly within each stage, and jumps down from the last layer of each stage to the first layer of next stage.
\end{itemize}
Such observations may help to understand the role of each stage in ResNets.

Comparing the percentage of nonzero entries in ResNets (shown in Figure~\ref{fig:sparsity_ResNet18} and Figure~\ref{fig:sparsity_ResNet50}) and for Transformers (shown in Figure~\ref{fig:sparsity_ViT_train_test}), both of which are trained on ImageNet-1k, we see that ResNets produce much denser activation maps with more than 10\% nonzero entries in all layers. {One possible explanation is that ResNet uses batch normalization (BN) before each activation function, while Transformer's MLP does not have BN before the activation function. To understand the effect of BN on sparsity, we conduct an experiment with BN in ResNet removed. Because ResNet cannot be effectively trained without BN, we decrease the learning rate from standard ResNet training by a factor of 10. Moreover, we add a learnable scalar multiplier that is initialized as 0 to all the residual branches, following the study in \cite{qi2020deep,bachlechner2021rezero}. The results for comparing with standard ResNet are reported in Figure~\ref{fig:sparsity_ResNet18_nobn}, where to separate the effect of using a smaller learning rate, we also compare with the method of training a regular ResNet but with a small learning rate compared to standard training. 
The two subfigures of Figure~\ref{fig:sparsity_ResNet18_nobn} show the effect of width on sparsity of the first and second layers in each residual block, respectively. It can be observed that, removing BN does not significantly change the sparsity level, except for small set of layers.

Meanwhile, the trend that larger models are sparser for Transformers (see Section~\ref{sec:larger_sparser}) holds for ResNets as well, as seen in Figure~\ref{fig:sparsity_ResNet18_varying_width}. 
Here, we vary the width of ResNet-18 by multiplying the number of output channels of each convolutional layer by a factor of $1$ (for width = 64), $2$ (for width = 128), and $4$ (for width = 256). 
The two subfigures show the effect of width on sparsity of the first and second layers in each residual block, respectively. 
In both cases, wider models have smaller percentage of nonzero entries across all layers, except for the very last layer (i.e., the 2nd layer in block \#7 shown in Figure~\ref{fig:sparsity_ResNet18_varying_width_second}).


\myparagraph{Sparsity and Residual Learning.}
We provide a study on the effect of residual connections on activation sparsity. 
Each Transformer block contains two types of residual connections: the one that is in parallel with the attention blocks, and the one that is in parallel with the MLP blocks. 
We focus on the residual connection parallel to the MLP blocks. 
We perform two different studies.

\begin{itemize}[leftmargin=*,topsep=0.1em,itemsep=-0.1em]
    \item \emph{Effect of shortcut connection.} Towards that, we train two T5-Large models, one using the vanilla Transformer block and the other with residual connection removed for the Transformer block on \texttt{encoder layer 6} (i.e., the $7\textsuperscript{th}$ encoder layer, as we count from 0). There is a 1.6\% evaluation accuracy drop with the latter model compared to the former model.

    The percentage of nonzero entries of these two Transformers are presented in Figure~\ref{fig:residual_sparsity_enc} for the encoder layers and in Figure~\ref{fig:residual_sparsity_dec} for the decoder layers. 
    It can be seen that in \texttt{encoder layer 6} for which the residual connection is removed, the sparsity has a very different trend during training compared to the corresponding layer of the vanilla Transformer. 
    Moreover, the sparsity level at all other layers also changes, though to a much smaller extend.
    
    \item \emph{Effect of initialization scale of the residual branch. } Many works have found that having the residual branch initialized at a smaller scale helps with stabilizing and accelerating the training of residual (convolutional) networks \citep{goyal2017accurate,zhang2019fixup} and Transformers \citep{touvron2021going}. Here for simplicity we consider the idea from \cite{qi2020deep,bachlechner2021rezero} where a trainable scalar multiplier that is initialized at zero is applied to the residual branch (a.k.a., ReZero). 
    
    We consider ViT trained on ImageNet-21k with ReZero added to the MLP modules. We find that this increases the training accuracy from 46.15\% to 46.85\% but reduces the validation accuracy from 47.58\% to 46.75\%. We plot the sparsity level of ViT with ReZero and compare it with the vanilla ViT in Figure~\ref{fig:sparsity_vit_rezero}. It can be seen that ReZero reduces the percentage of nonzeros in layers near the network output. 
\end{itemize}

\begin{figure*}[t]
    \centering
    \includegraphics[width=0.9\textwidth]{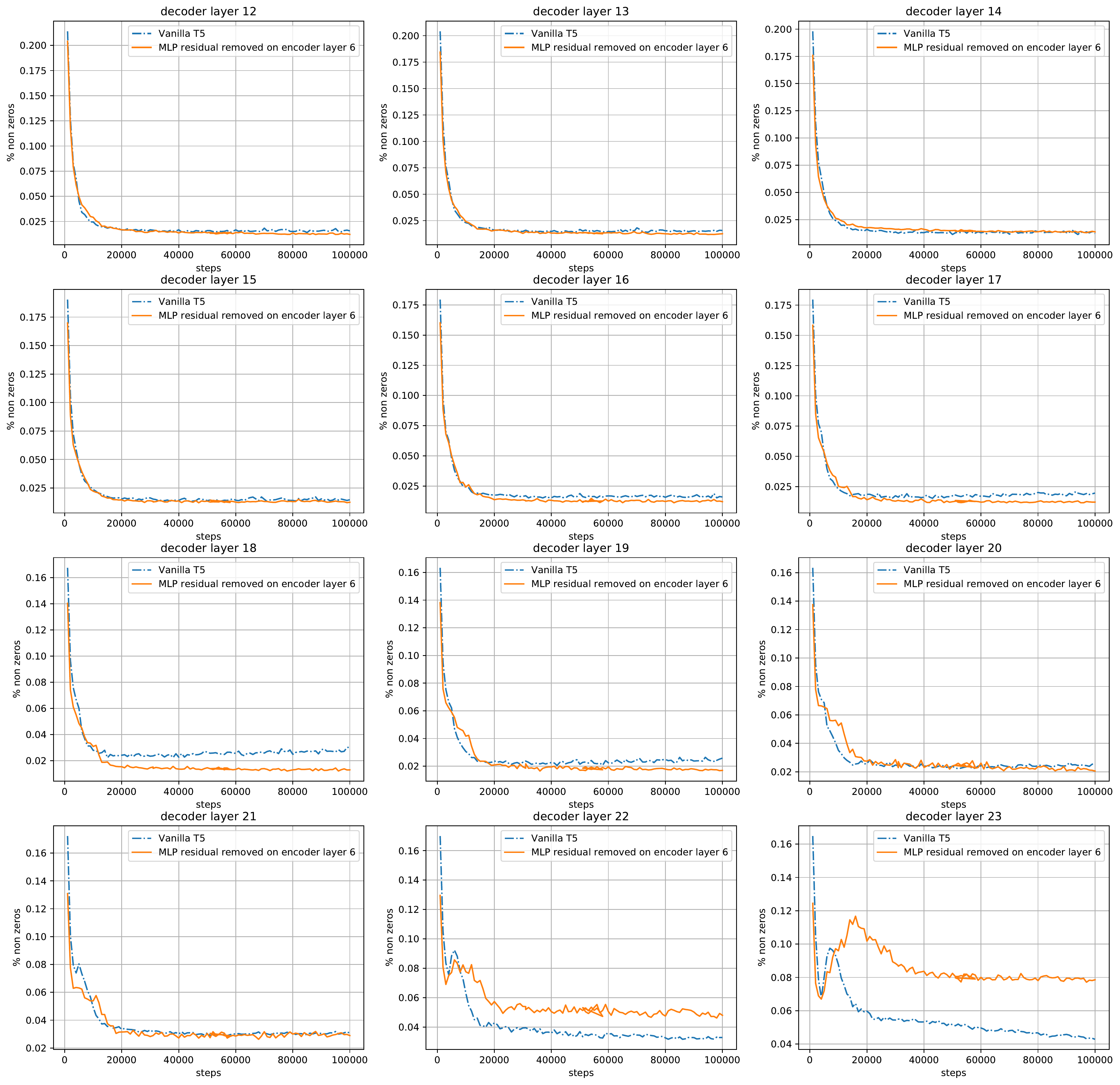}
    \caption{Same setup as Figure~\ref{fig:residual_sparsity_enc}, but showing the results for the last 12 layers of the decoder.}
    \label{fig:residual_sparsity_dec}
\end{figure*}

\subsection{Sparsity and Optimizer}
\label{sec:sparsity_and_optimizer}

\begin{figure*}[h!]
    \centering
    \includegraphics[width=0.45\textwidth]{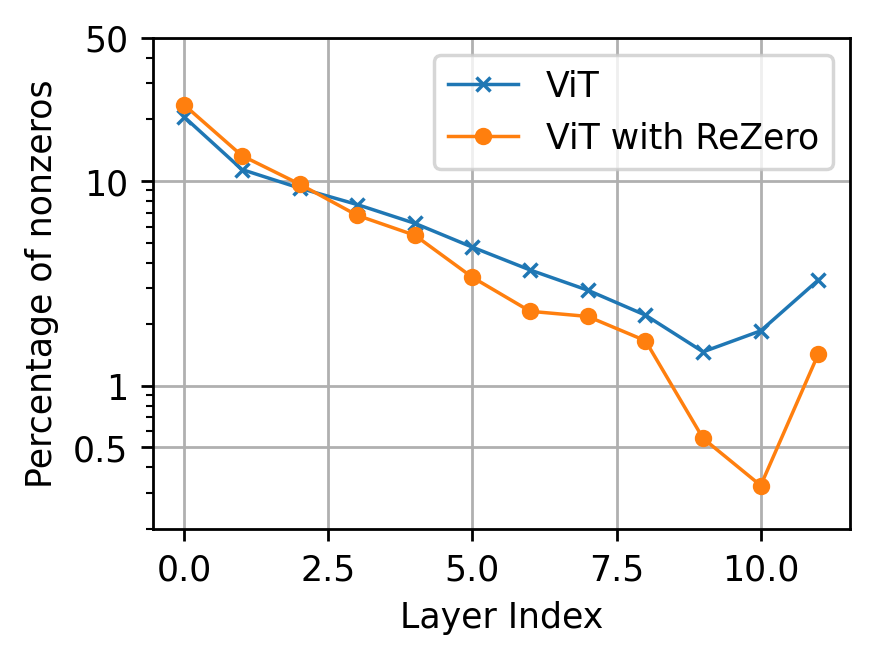}
    \caption{Effect of initialization scale of the residual branch. We add a scalar multiplier that is initialized at 0 on the residual branch (a.k.a., ReZero \cite{bachlechner2021rezero}) of the MLP modules of a ViT, and train the model on ImageNet-21k. The percentage of nonzero entries is compared with those obtained with a regular ViT. }
    \label{fig:sparsity_vit_rezero}
\end{figure*}


\begin{figure}[t]
\centering  
\begin{subfigure}{0.35\textwidth}
    \includegraphics[width=\textwidth]{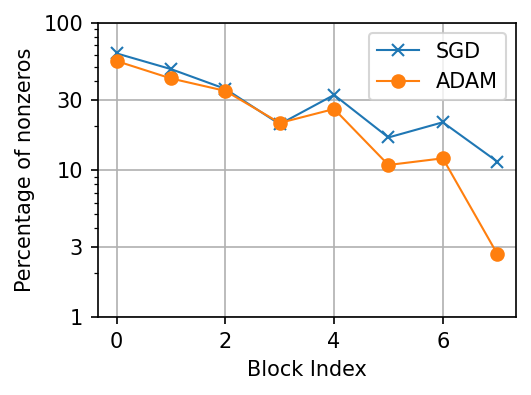}
      \caption{1st Layers}
      \label{fig:sparsity_ResNet18_optimizer_first}
\end{subfigure}
~~~~~~
\begin{subfigure}{0.35\textwidth}
    \includegraphics[width=\textwidth]{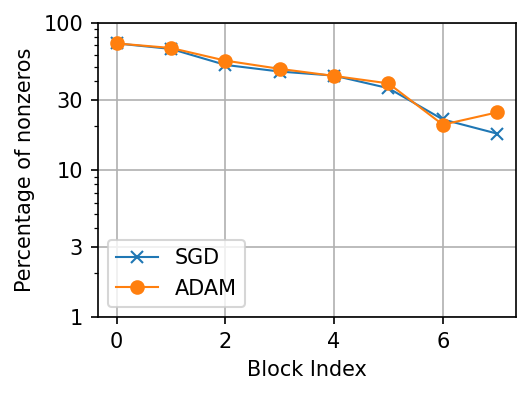}
      \caption{2nd Layers}
      \label{fig:sparsity_ResNet18_optimizer_second}
\end{subfigure}
\caption{Effect of optimizer on sparsity level across layers of ResNet-18.}
\label{fig:sparsity_ResNet18_optimizer}
\end{figure}

\begin{figure}[t]
\centering  
\begin{subfigure}{0.32\textwidth}
    \includegraphics[width=\textwidth]{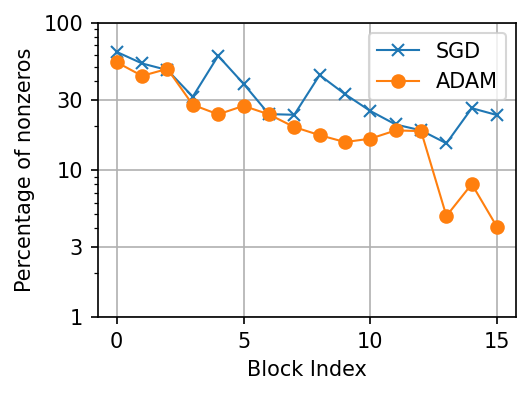}
      \caption{1st Layers}
      \label{fig:sparsity_ResNet50_optimizer_first}
\end{subfigure}
~
\begin{subfigure}{0.32\textwidth}
    \includegraphics[width=\textwidth]{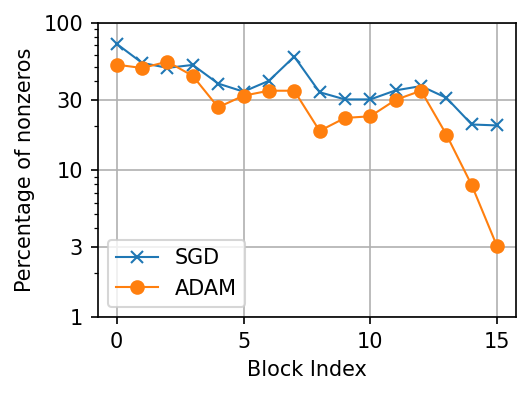}
      \caption{2nd Layers}
      \label{fig:sparsity_ResNet50_optimizer_second}
\end{subfigure}
~
\begin{subfigure}{0.32\textwidth}
    \includegraphics[width=\textwidth]{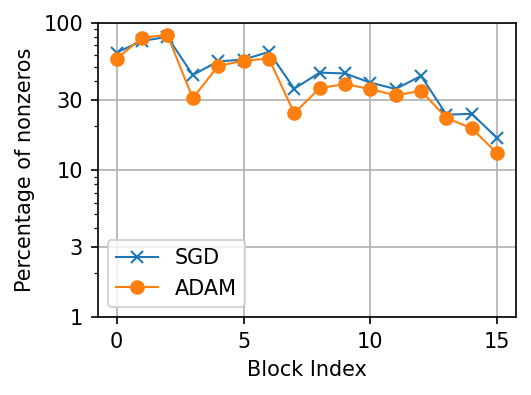}
      \caption{3rd Layers}
      \label{fig:sparsity_ResNet50_optimizer_third}
\end{subfigure}
\caption{Effect of optimizer on sparsity level across layers of ResNet-50.}
\label{fig:sparsity_ResNet50_optimizer}
\end{figure}

Transformers are usually trained using ADAM or its variants as the optimizer \cite{kingma2015adam}. 
It may be curious to ask whether the emergence of sparsity is specific to such optimizers and whether other optimizers, such as stochastic gradient descent (SGD), also leads to sparse activation maps. 
However, we find that SGD cannot effectively train Transformer architectures such as T5 and ViT. 
Hence, we study the effect of optimizer on activation sparsity by looking at ResNet trained on ImageNet-1k following the setup in Section~\ref{sec:sparsity_and_architecture}, since both SGD and ADAM can effectively train the network. 
To train ResNet with ADAM, we use the same hyper-parameters as those used in SGD, with the only difference being that the optimizer is ADAM with $\beta_1 = 0.9, \beta_2 = 0.999$.
To make the comparison with SGD fair, we tune the base learning rate for ADAM and select $3e-3$, which is the one that gives the highest training accuracy among the set of $\{1e-4, 3e-4, 1e-3, 3e-3, 1e-2\}$.
The training accuracy obtained by ADAM with base learning rate $3e-3$ is similar to that obtained by SGD, namely, 67.8\% by ADAM vs 69.3\% by SGD with ResNet-18, and 75.0\% by ADAM vs 78.5\% by SGD with ResNet-50.

The results for ResNet-18 and ResNet-50 are presented in Figure~\ref{fig:sparsity_ResNet18_optimizer} and Figure~\ref{fig:sparsity_ResNet50_optimizer}, respectively. 
For ResNet-18, we see that ADAM leads to a smaller percentage of nonzero entries particularly towards the output of the network for the first layers of each residual block. 
In contrast, ADAM and SGD have very similar sparsity level at the second layers of each residual block.
Similar observation holds for ResNet-50, where the percentage of nonzero entries is smaller with ADAM for the first and second layers of each residual block, while for the third layer the sparsity level does not change much.

\subsection{Sparsity in Finetuning}
\label{sec:sparsity_in_finetuning}

{In this section we show that activation sparsity not only occurs after model pretraining but persists after further finetuning on downstream tasks. 
Here we take a T5 that has been pretrained on C4 as described in Section~\ref{sec:experimental-setup}, and finetune the model on a open domain Natural Question~\citep{kwiatkowski2019natural} QA task. We follow the set up in~\cite{li2022decoupled}, where the retrieved passages are independently encoded by the encoder, and then passed to the decoder via cross attention. The decoder takes the question as the prefix and produces the answer. The decoder is a standard auto-regressive setup, despite passing the question as a prefix. We used the same Wikipedia passages retrieved by FiD~\citep{izacard2020leveraging} which uses DPR~\citep{karpukhin-etal-2020-dense} as retriever. For this experiment, we used 20 passages for each question. When calculating the sparsity level, we ignored the activation produced from paddings, for both encoder and decoder.
The sparsity level of different encoder and decoder layers after finetuning is reported in Figure~\ref{fig:sparsity_T5_finetune} and is compared to those before the finetuning. 
It can be seen that finetuning does not drastically change the sparsity level.
}

\begin{figure}
\begin{subfigure}[b]{0.4\linewidth}
    \centering
    \includegraphics[width=\textwidth]{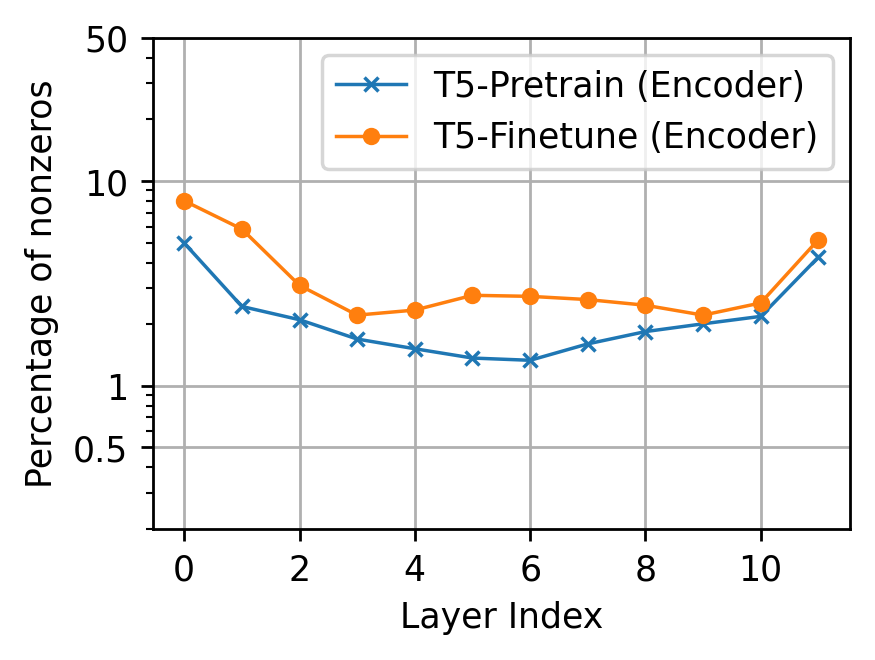}
    \caption{Encoder}
    \label{fig:sparsity_T5_finetune_encoder}
\end{subfigure}
\hfill
\begin{subfigure}[b]{0.4\linewidth}
    \centering
    \includegraphics[width=\textwidth]{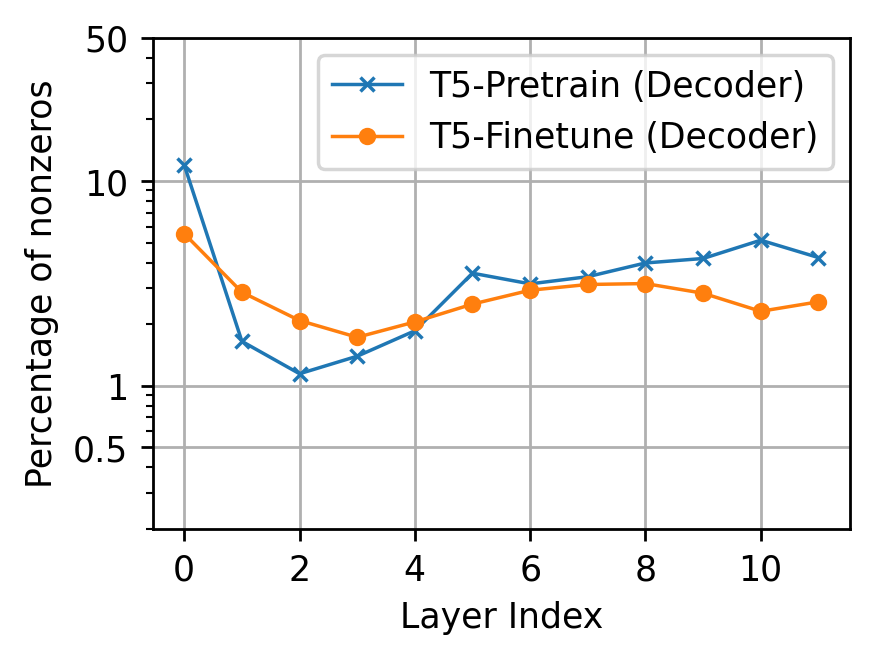}
    \caption{Decoder}
    \label{fig:sparsity_T5_finetune_decoder}
\end{subfigure}
\vspace{-0.5em}
\caption{{Percentage of nonzero entries across different layers of trained T5 after pretraining vs after finetuning on a question answering task. Left: Results on encoder layers. Right: Results on decoder layers.}} 
\label{fig:sparsity_T5_finetune}
\end{figure}

\section{Additional Results on Benefits of Sparsity}
\label{sec:additional_results_benefits}



\subsection{Top-$k$ Does not Significantly Affect Training Convergence}

\begin{wrapfigure}{r}{0.45\textwidth}
\centering 
\vspace{-2em}
\includegraphics[width=0.99\linewidth,trim={0cm 0 0cm 0},clip]{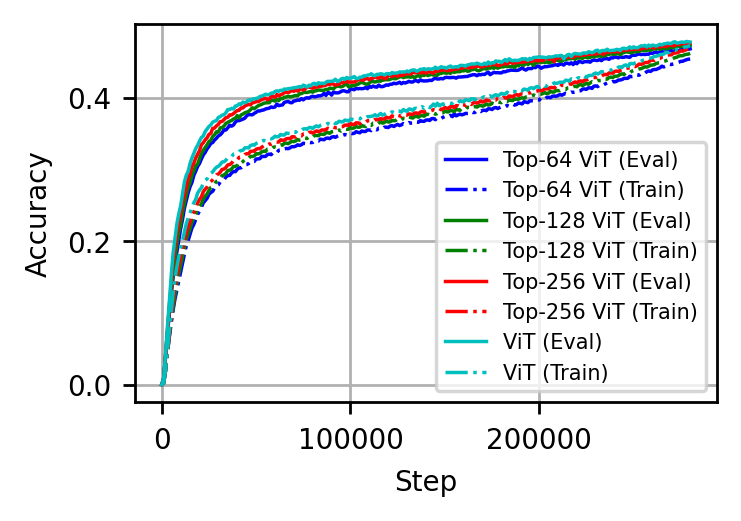}
\vspace{-2em}
\caption{{Learning curves for results reported in Figure~\ref{fig:topk_vit_accuracy}. ViT and  Top-$k$ ViT have similar convergence speed.}}
\vspace{-1em}
\label{fig:topk_vit_convergence}
\end{wrapfigure}
One {may argue that the convergence of Top-$k$ Transformer can be slower than that of a vanilla Transformer because, with fewer neurons being activated, the amount of parameters that have nonzero gradient associated with each training sample is smaller.
Here we plot the training curves for ViT as well as Top-$k$ ViT with $k \in \{64, 128, 256\}$ on ImageNet-21K (this is the same setup as the experiments in Figure~\ref{fig:topk_vit_accuracy}), and report the results in Figure~\ref{fig:topk_vit_convergence}. 
It can be observed that taking Top-$k$ does not significantly reduce convergence speed, particularly when $k$ is relatively large (e.g., $k = 256$).}
This may not be very surprising since, as can be seen in Figure~\ref{fig:T5_sparsity_across_layers}, even in training of vanilla Transformers, the percentage of nonzeros drops very quickly at the beginning of training, hence very few neurons are being activated for each input token throughout most of the training process. 

\subsection{Benefits of Sparsity Persists with L1-Norm Induced Sparsity}

While Top-$k$ thresholding is used in Section~\ref{sec:robustness_calibration} to demonstrate the benefit of sparsity, we show that other means of obtaining sparsity, such as an explicit $\ell_1$ norm regularization, also provides such benefits. 

We experiment with ViT for ImageNet-1k classification under the same setup as in Section~\ref{sec:robustness_calibration}. 
Here instead of the Top-$k$ ViT, we train a regular ViT but with an additional loss term, which is the sum of the $\ell_1$ norm of all activation maps of ViT across all layers. 
We refer to the method as L1-ViT.
We vary the weight $\lambda$ on the $\ell_1$ loss in the set $\lambda \in \{0.001, 0.01, 0.1, 1.0\}$ to control the strength of the regularization, and denote the corresponding methods as L1-ViT-$\{0.001, 0.01, 0.1, 1.0\}$.

The sparsity level, natural accuracy, robust accuracy under input perturbation, and ECE of L1-ViT are reported in Table~\ref{tab:l1_transformer}.
We see that the averaged percentage of nonzero entries decreases with an increasing $\lambda$.
With $\lambda = 0.001, 0.01$ and $\lambda = 0.1$, the robust accuracy and calibration all improve over ViT without hurting the natural accuracy. 
Using a $\lambda = 1.0$ drastically reduces the percentage of nonzero entries and hurts the natural accuracy, with robust accuracy and ECE on par with ViT.

\subsection{Activation Regularization with L2 Norm}

In addition to using $\ell_1$ norm on the activation maps for enforcing sparsity, we also experiment with using $\ell_2$ norm on the activation maps and report the results in Table~\ref{tab:l2_transformer}. 
It can be observed that $\ell_2$ regularization also reduces the percentage of nonzero entries and improves performance in terms of robust accuracy under input perturbation and calibration error. 
However, the improvement in performance is less pronounced compared to those obtained with L1-ViT.
For example, if we compare L1-ViT-0.01 and L2-ViT-0.1 which has similar sparsity level, the former one produces notably higher accuracy under input perturbation and lower ECE. 

Conceptually, both $\ell_1$ and $\ell_2$ norm reduce the magnitude of entries in activation maps. 
The difference is that $\ell_2$ norm penalizes more on entries with large magnitude, compared to the $\ell_1$ norm which relatively focuses more on small entries.
The fact that the benefit from $\ell_2$ regularization is less prominent than $\ell_1$ regularization indicates that the magnitude of large entries is less relevant for obtaining robustness and calibration.
What matters more is whether the smaller entries can be suppressed or not.

\begin{table}[t]
\centering
\caption{Evaluation of ViT with a varying weights $\in \{0.001, 0.01, 0.1, 1.0\}$ on a $\ell_1$ regularization upon activation maps for ImageNet-1k classification in terms of 1) averaged percentage of nonzero entries in activation maps across all layers, 2) natural accuracy (i.e., on ImageNet-1k evaluation set), 3) robust accuracy under input perturbation with additive \{Gaussian, Impulse, Shot\} noise, and 4) calibration error measured by ECE.
}
\begin{tabular}{l|c|c|ccc|c}
    \toprule
    Methods & \begin{tabular}{@{}c@{}}Avg. Perc. \\of Nonzeros\end{tabular} & \begin{tabular}{@{}c@{}}Natural \\Accuracy\end{tabular} & \multicolumn{3}{c|}{\begin{tabular}{@{}c@{}}Accuracy under \\ Input Perturbation\end{tabular}}  & \begin{tabular}{@{}c@{}}Expected Calibration \\Error (ECE)\end{tabular} \\
    & & & Gaussian & Impulse & Shot & \\
    \midrule
    ViT                       & 5.67\% & 74.85\% & 39.54\% & 37.37\% & 38.56\% & 8.42\% \\
    \midrule
    L1-ViT-0.001 & 5.39\% & 75.03\% & 42.42\% & 40.57\% & 41.33\% & 7.75\% \\
    \midrule
    L1-ViT-0.01 & 4.52\% & 74.98\% & 42.78\% & 41.11\% & 41.48\% & 6.86\% \\
    \midrule
    L1-ViT-0.1 & 3.34\% & 74.55\% & 41.90\% & 40.30\% & 40.44\% & 6.28\% \\
    \midrule
    L1-ViT-1.0 & 1.60\% & 73.21\% & 40.26\% & 38.01\% & 38.95\% & 6.34\% \\
     \bottomrule
\end{tabular}
\label{tab:l1_transformer}
\end{table}

\begin{table}[t]
\centering
\caption{Same as Table~\ref{tab:l1_transformer} but with $\ell_1$ regularization replaced by $\ell_2$ regularization of varying weights $\in \{0.001, 0.01, 0.1, 1.0\}$.
}
\begin{tabular}{l|c|c|ccc|c}
    \toprule
    Methods & \begin{tabular}{@{}c@{}}Avg. Perc. \\of Nonzeros\end{tabular} & \begin{tabular}{@{}c@{}}Natural \\Accuracy\end{tabular} & \multicolumn{3}{c|}{\begin{tabular}{@{}c@{}}Accuracy under \\ Input Perturbation\end{tabular}}  & \begin{tabular}{@{}c@{}}Expected Calibration \\Error (ECE)\end{tabular} \\
    & & & Gaussian & Impulse & Shot & \\
    \midrule
    ViT                       & 5.67\% & 74.85\% & 39.54\% & 37.37\% & 38.56\% & 8.42\% \\
    \midrule
    L2-ViT-0.001 & 5.65\% & 74.90\% & 40.55\% & 38.31\% & 39.32\% & 8.25\% \\ 
    \midrule
    L2-ViT-0.01 & 5.22\% & 75.03\% & 41.26\% & 39.63\% & 40.03\% & 7.66\% \\ 
    \midrule
    L2-ViT-0.1 & 4.43\% & 74.67\% & 40.48\% & 38.51\% & 39.24\% & 7.64\% \\ 
    \midrule
    L2-ViT-1.0 & 3.08\% & 74.49\% & 40.76\% & 38.29\% & 39.37\% & 7.10\% \\ 
     \bottomrule
\end{tabular}
\label{tab:l2_transformer}
\end{table}

\section{Proof of Theorem~\ref{thm:gradient-on-activation}}
\label{sec:derivation}

\begin{proof}[Proof of Theorem~\ref{thm:gradient-on-activation}]
    For an arbitrary loss $\ell(f(\x), \y)$, we have
    \begin{equation}\label{eq:proof-gradient-wrt-pi}
        \frac{\partial \ell}{\partial p_{i^*}} 
        = \left\langle \frac{\partial \ell}{\partial f}, \frac{\partial f}{\partial p_{i^*}} \right\rangle
        = \left\langle \frac{\partial \ell}{\partial f}, \v_{i^*} \right\rangle.
    \end{equation}
    
    First, Consider $\ell = \ell_{MSE}$. We have
    \begin{equation}
        \frac{\partial \ell_{MSE}}{\partial f} = f(\x) - \y = \sum_i \sigma(p_i) \cdot \v_i - \y.
    \end{equation}
    Plugging this into \eqref{eq:proof-gradient-wrt-pi}, we obtain
    \begin{equation}
    \begin{split}
        \frac{\partial \ell_{MSE}}{\partial p_{i^*}} &= \left(\sum_i \sigma(p_i) \langle \v_i, \v_{i^*} \rangle\right) - \langle \v_{i^*}, \y \rangle \\
        &= \left(\sum_{i \ne i^*} \sigma(p_i) \langle \v_i, \v_{i^*} \rangle\right) + \sigma(p_{i^*}) \langle \v_{i^*}, \v_{i^*} \rangle- \langle \v_{i^*}, \y \rangle
    \end{split}
    \end{equation}
    Taking the expectation, and noting the conditions in \eqref{eq:initialization-properties-MSE}, we have
    \begin{equation}
        \expect{ \frac{\partial \ell_{MSE}}{\partial p_{i^*}} } = 0 + \sigma(p_{i^*}) \expect{\langle \v_{i^*}, \v_{i^*} \rangle} + 0 > 0.
    \end{equation}
    This finishes the proof for MSE loss.
    
    In the rest of the proof we consider $\ell = \ell_{CE}$. We have
    \begin{equation}
        \frac{\partial \ell_{CE}}{\partial f} = \frac{\exp(f(\x))}{\langle \exp(f(\x)), \1 \rangle} - \y 
        = \frac{\exp(\sum_i \sigma(p_i) \cdot \v_i)}{\langle \exp(\sum_i \sigma(p_i) \cdot \v_i), \1 \rangle} - \y.
    \end{equation}
    Plugging this into \eqref{eq:proof-gradient-wrt-pi}, we obtain
    \begin{equation}\label{eq:prf-grad-ce}
    \begin{split}
        \frac{\partial \ell_{CE}}{\partial p_{i^*}} 
        &= \frac{\langle \exp(\sum_i \sigma(p_i) \cdot \v_i), \v_{i^*}\rangle}{\langle \exp(\sum_i \sigma(p_i) \cdot \v_i), \1 \rangle} - \langle \v_{i^*}, \y \rangle
    \end{split}
    \end{equation}
    For the enumerator in the first term on the RHS of the equation above, we have
    \begin{equation}
    \begin{split}
        \left\langle \exp\left(\sum_i \sigma(p_i) \cdot \v_i\right), \v_{i^*} \right\rangle 
        &= \sum_{m} \left(v_{i^*, m} \cdot \exp\left(\sum_i \sigma(p_i) \cdot v_{im} \right)\right) \\
        &= \sum_{m} \left(v_{i^*, m} \cdot \exp\left(p_{i^*} \cdot v_{i^*,m}\right)\cdot \exp\left(\sum_{i \ne i^*} \sigma(p_i) \cdot v_{i, m} \right) \right)
    \end{split}
    \end{equation}
    Plugging this into \eqref{eq:prf-grad-ce} and denoting 
    $$C_m^{(1)} = \exp\left(\sum_{i \ne i^*} \sigma(p_i) \cdot v_{i, m} \right),$$ 
    we obtain
    \begin{equation}\label{eq:prf-grad-ce-continue}
    \begin{split}
        \frac{\partial \ell_{CE}}{\partial p_{i^*}} 
        &= \sum_{m} \left( \frac{v_{i^*, m} \cdot \exp\left(p_{i^*} \cdot v_{i^*,m}\right)\cdot C_m^{(1)} }{\langle \exp(\sum_i \sigma(p_i) \cdot \v_i), \1 \rangle} \right) - \langle \v_{i^*}, \y \rangle
    \end{split}
    \end{equation}
    For the denominator in the first term on the RHS of the equation above, we have
    \begin{equation}
    \begin{split}
        \left\langle \exp\left(\sum_i \sigma(p_i) \cdot \v_i\right), \1 \right\rangle 
        &= \sum_{m'} \exp\left(\sum_i \sigma(p_i) \cdot v_{im'}\right)\\
        &= \sum_{m'} \left( \exp\left(p_{i^*} \cdot v_{i^*,m'}\right) \cdot \exp\left(\sum_{i \ne i^*} \sigma(p_i) \cdot v_{im'}\right) \right)\\
        &= \exp\left(p_{i^*} \cdot v_{i^*, m}\right) \cdot \exp\left(\sum_{i \ne i^*} \sigma(p_i) \cdot v_{i, m}\right) \\
        &+
        \sum_{m' \ne m}\left( \exp\left(p_{i^*} \cdot v_{i^*,m'}\right) \cdot \exp\left(\sum_{i \ne i^*} \sigma(p_i) \cdot v_{im'}\right) \right)
    \end{split}
    \end{equation}
    Plugging this into \eqref{eq:prf-grad-ce-continue} and denoting 
    \begin{gather}
        C_m^{(2)} = \exp\left(\sum_{i \ne i^*} \sigma(p_i) \cdot v_{i, m}\right), \\
        C_m^{(3)} = \sum_{m' \ne m}\left( \exp\left(p_{i^*} \cdot v_{i^*,m'}\right) \cdot \exp\left(\sum_{i \ne i^*} \sigma(p_i) \cdot v_{im'}\right) \right),
    \end{gather}
    we obtain
    \begin{equation}
    \begin{split}
        \frac{\partial \ell_{CE}}{\partial p_{i^*}} 
        &= \sum_{m} \left( \frac{v_{i^*, m} \cdot \exp\left(p_{i^*} \cdot v_{i^*,m}\right)\cdot C_m^{(1)} }{\exp\left(p_{i^*} \cdot v_{i^*, m}\right) \cdot C_m^{(2)} + C_m^{(3)}} \right) - \langle \v_{i^*}, \y \rangle.
    \end{split}
    \end{equation}
    Taking expectation with respect to $\V$ on both sides, and using the assumption that all entries of $V$ are independent, we have
    \begin{equation}\label{eq:prf-grad-ce-final}
    \begin{split}
        \expect{\frac{\partial \ell_{CE}}{\partial p_{i^*}} }
        &= \sum_{m} \expect{ \frac{v_{i^*, m} \cdot \exp\left(p_{i^*} \cdot v_{i^*,m}\right)\cdot C_m^{(1)} }{\exp\left(p_{i^*} \cdot v_{i^*, m}\right) \cdot C_m^{(2)} + C_m^{(3)}} } - \expect{ \langle \v_{i^*}, \y \rangle }\\
        &= \sum_{m} \expectwrt{\{v_{i, l} | (i, l) \ne (i^*, m)\}}{ \expectwrt{v_{i^*, m}}{ \frac{v_{i^*, m} \cdot \exp\left(p_{i^*} \cdot v_{i^*,m}\right)\cdot C_m^{(1)} }{\exp\left(p_{i^*} \cdot v_{i^*, m}\right) \cdot C_m^{(2)} + C_m^{(3)}} }} - \expect{ \langle \v_{i^*}, \y \rangle }.
    \end{split}
    \end{equation}
    In above, $\expectwrt{v_{i^*, m}}{}$ means expectation with respect to $v_{i^*, m}$, and $\expectwrt{\{v_{i, l} | (i, l) \ne (i^*, m)\}}{}$ means expectation with respect to all other entries in $\V$. 
    Note that $C_m^{(1)}, C_m^{(2)}$, and $C_m^{(3)}$ are independent of $v_{i^*, m}$. By Lemma~\ref{thm:expectation-bound} and using the assumption that the expectation of $\V$ is zero, we have
    \begin{equation}
        \expectwrt{v_{i^*, m}}{ \frac{v_{i^*, m} \cdot \exp\left(p_{i^*} \cdot v_{i^*,m}\right)\cdot C_m^{(1)} }{\exp\left(p_{i^*} \cdot v_{i^*, m}\right) \cdot C_m^{(2)} + C_m^{(3)}} } > 0,
    \end{equation}
    and
    \begin{equation}
        \expect{ \langle \v_{i^*}, \y \rangle } = 0.
    \end{equation}
    Plugging the above two relations into \eqref{eq:prf-grad-ce-final}, we obtain
    \begin{equation}
        \expect{\frac{\partial \ell_{CE}}{\partial p_{i^*}} } > 0.
    \end{equation}
\end{proof}

The following lemma is used in the proof above.
\begin{lemma}\label{thm:expectation-bound}
Let $\mathsf{V}$ be a random variable with a probabilistic density function $p(v)$ that satisfies $P(\mathsf{V} = 0) \ne 1$. Let $C_1, C_2, C_3$ and $p$ be positive numbers. Then,
\begin{equation}
    \expect{\frac{C_1 \mathsf{V} \cdot \exp(p v)}{C_2 \exp(p \mathsf{V}) + C_3}} > \frac{C_1}{C_2 + C_3}\expect{\mathsf{V}}.
\end{equation}
\end{lemma}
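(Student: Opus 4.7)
The plan is to recognize that, after a small rearrangement, the claimed inequality is a clean covariance-type bound driven by strict monotonicity of the ``sigmoidal'' function
\begin{equation}
    g(v) \doteq \frac{\exp(p v)}{C_2 \exp(p v) + C_3},
\end{equation}
where I interpret the $v$ in the numerator of the lemma statement as $\mathsf{V}$ (this is evidently a typo, since otherwise the integrand depends on a free variable $v$ and the expectation is ill-posed). Once one thinks of the integrand as $C_1 \mathsf{V} \cdot g(\mathsf{V})$, the target bound becomes $\mathbb{E}[C_1 \mathsf{V} g(\mathsf{V})] > C_1 g(0) \mathbb{E}[\mathsf{V}]$, because $g(0) = 1/(C_2 + C_3)$. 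So the whole task reduces to establishing
\begin{equation}
    \mathbb{E}\bigl[\mathsf{V}\bigl(g(\mathsf{V}) - g(0)\bigr)\bigr] > 0.
\end{equation}

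First I would verify strict monotonicity of $g$ by a one-line quotient-rule computation, yielding $g'(v) = p C_3 \exp(p v) / (C_2 \exp(p v) + C_3)^2$, which is strictly positive for all $v \in \mathbb{R}$ since $p, C_3 > 0$. Next, I would observe the pointwise sign fact that for \emph{every} real $v$ one has $v\bigl(g(v) - g(0)\bigr) \geq 0$, with equality if and only if $v = 0$: if $v > 0$, strict monotonicity gives $g(v) > g(0)$ and the product of two positives is positive; if $v < 0$, monotonicity gives $g(v) < g(0)$ and the product of two negatives is positive; and if $v = 0$, the factor $v$ vanishes.

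Finally, I would combine the two observations. The random variable $\mathsf{V}(g(\mathsf{V}) - g(0))$ is therefore nonnegative almost surely, and it is strictly positive on the event $\{\mathsf{V} \neq 0\}$. The hypothesis $P(\mathsf{V} = 0) \neq 1$ (combined with the implicit assumption that $\mathsf{V}$ admits a density, so measurability is not an issue) implies this event has positive probability, so its expectation is strictly positive. Multiplying by $C_1 > 0$ delivers the claimed strict inequality.

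There is essentially no hard step here; the main ``obstacle'' is really just identifying the right rearrangement and the correct reading of the statement. One minor point to be careful about is the strictness of the final inequality: a naive Jensen-type argument would only give ``$\geq$'', so the proof must explicitly exploit the hypothesis $P(\mathsf{V} = 0) \neq 1$ by noting that the pointwise inequality $v(g(v) - g(0)) \geq 0$ is strict off of $\{v = 0\}$. I would also briefly remark that the lemma's assumption that $\mathsf{V}$ has a density is not really used anywhere; the proof works verbatim for any distribution with $P(\mathsf{V} = 0) < 1$.
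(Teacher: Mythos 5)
Your proof is correct and is essentially the paper's argument: both rest on the strict monotonicity of the same function (your $g(v)=\exp(pv)/(C_2\exp(pv)+C_3)$ equals the paper's $g(v)=C_1/(C_2+C_3\exp(-pv))$ up to the factor $C_1$), compare $g(\mathsf{V})$ against $g(0)$ according to the sign of $\mathsf{V}$, and extract strictness from $P(\mathsf{V}=0)\neq 1$. Your packaging is marginally tidier --- a single almost-surely nonnegative variable $\mathsf{V}\bigl(g(\mathsf{V})-g(0)\bigr)$ that is positive with positive probability, rather than the paper's split of the integral over $(-\infty,0]$ and $[0,\infty)$ with its ``without loss of generality $b>a\geq 0$'' step --- and your remark that the density assumption is superfluous is accurate, but the underlying idea is identical.
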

\begin{proof}
We may calculate the expectation by using the probabilistic density function $p(v)$ as
\begin{equation}
\begin{split}
    \expect{\frac{C_1 \mathsf{V} \cdot \exp(p \mathsf{V})}{C_2 \exp(p \mathsf{V}) + C_3}} 
    = \expect{\frac{C_1 \mathsf{V}}{C_2 + C_3 \exp(-p \mathsf{V})}}  
    = \int_{-\infty}^{\infty} \frac{C_1 v}{C_2 + C_3 \exp(-p v)} p(v) dv 
    \doteq \int_{-\infty}^{\infty} g(v) \cdot v p(v) dv.
\end{split}
\end{equation}
Since $g(v)$ is monotonically increasing for $v \in \RR$, we have $g(v) \ge g(0)$ for $v \ge 0$ and $g(v) \le g(0)$ for $v \le 0$. Hence,
\begin{gather}
    \int_{-\infty}^{0} g(v) \cdot v p(v) dv \ge g(0) \int_{-\infty}^{0} v p(v) dv, \label{eq:prf-int-neg}\\
    \int_{0}^{\infty} g(v) \cdot v p(v) dv \ge g(0) \int_{0}^{\infty} v p(v) dv. \label{eq:prf-int-pos}
\end{gather}
Moreover, since $P(\mathsf{V} = 0) \ne 1$, there exists an interval $(a, b)$ such that $\int_{a}^b p(v) dv > 0$. 
Without loss of generality we assume that $b > a \ge 0$. Then,
\begin{equation}
    \int_{a}^{b} g(v) \cdot v p(v) dv > g(0) \int_{a}^{b} v p(v) dv.
\end{equation}
That is, the inequality in \eqref{eq:prf-int-pos} holds with strict inequality.
Hence we have
\begin{equation}
\begin{split}
    \expect{\frac{C_1 \mathsf{V} \cdot \exp(p \mathsf{V})}{C_2 \exp(p \mathsf{V}) + C_3}}
    =  \int_{-\infty}^{\infty} g(v) \cdot v p(v) dv
    > g(0)\expect{\mathsf{V}} = \frac{C_1}{C_2 + C_3} \expect{\mathsf{V}}.
\end{split}
\end{equation}
\end{proof}

\section{Insights from Sparsity in MLPs}
\label{sec:mlp_insights}

\begin{figure*}[h!]
    \centering
    \includegraphics[width=0.9\textwidth]{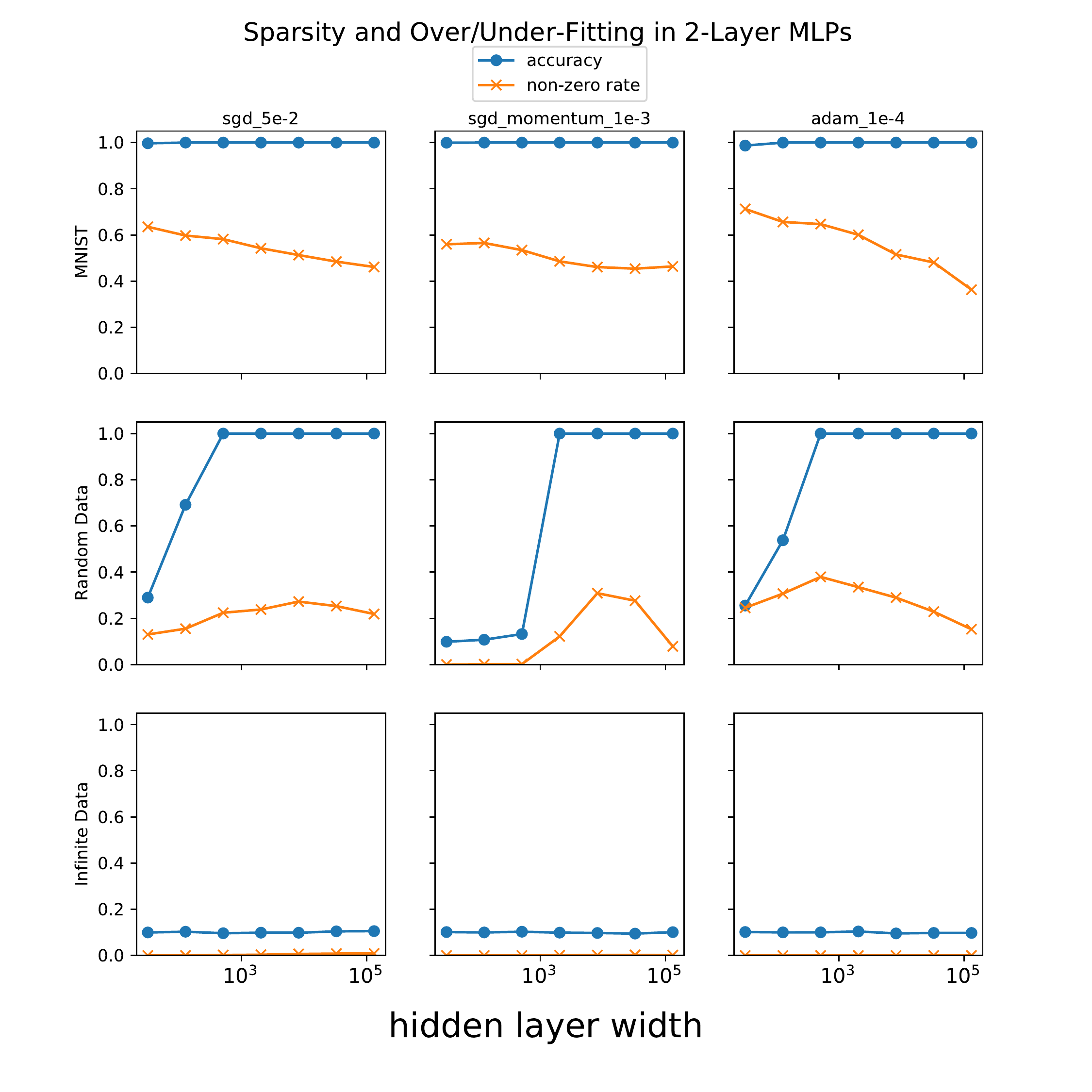}
    \caption{Training accuracy and percentage of nonzero entries (both on the y-axis) in activation maps of two-layer MLPs of varying width (on the x-axis, in log scale) after 200 epochs of training.
    Rows correspond to different training datasets, and columns correspond to different training algorithms.
    }
    \label{fig:mlp_sparsity}
\end{figure*}

We study the sparsity of activation maps in two-layer MLPs.
By showing that sparsity emerges, the result here extends the scope of prevalence of activation sparsity from modern DNNs to two-layer MLPs which are one of the simplest neural network architectures. 
Moreover, by training such two-layer MLPs with different types of data, we provide additional insights on the causes for emergence of sparsity.

\myparagraph{Datasets.}
We conduct our experiment with the MNIST dataset, which contains 60,000 grey scale images of handwritten digits.
Similar to the experiment in Section~\ref{sec:causes}, we also consider a dataset with \emph{random} data, as well as a dataset with \emph{infinite} data.
For the random data, we replace each image of MNIST with a random one drawn from sampling i.i.d. pixels from uniform distribution, and each label with a random class amongst 10. 
Note that the image-label pairs are fixed throughout training. 
For the infinite data, the random images and random labels are generated on-the-fly, representing a random dataset of infinite size. 

\myparagraph{Models and Training.}
We train two-layer MLPs with ReLU activation maps with varying width (i.e., hidden dimension): 32, 128, 512, 2048, 8192, 32768 and 131072. 
We use three different optimizers: SGD, SGD with momentum, and Adam, all for 200 epochs (for the infinite data case, we use the same number of iterations as that for training on MNIST and random data). We find that $200$ epochs is sufficient for the reported metrics to converge in most of the cases.

\begin{figure*}[h!]
    \centering
    \includegraphics[width=0.9\textwidth]{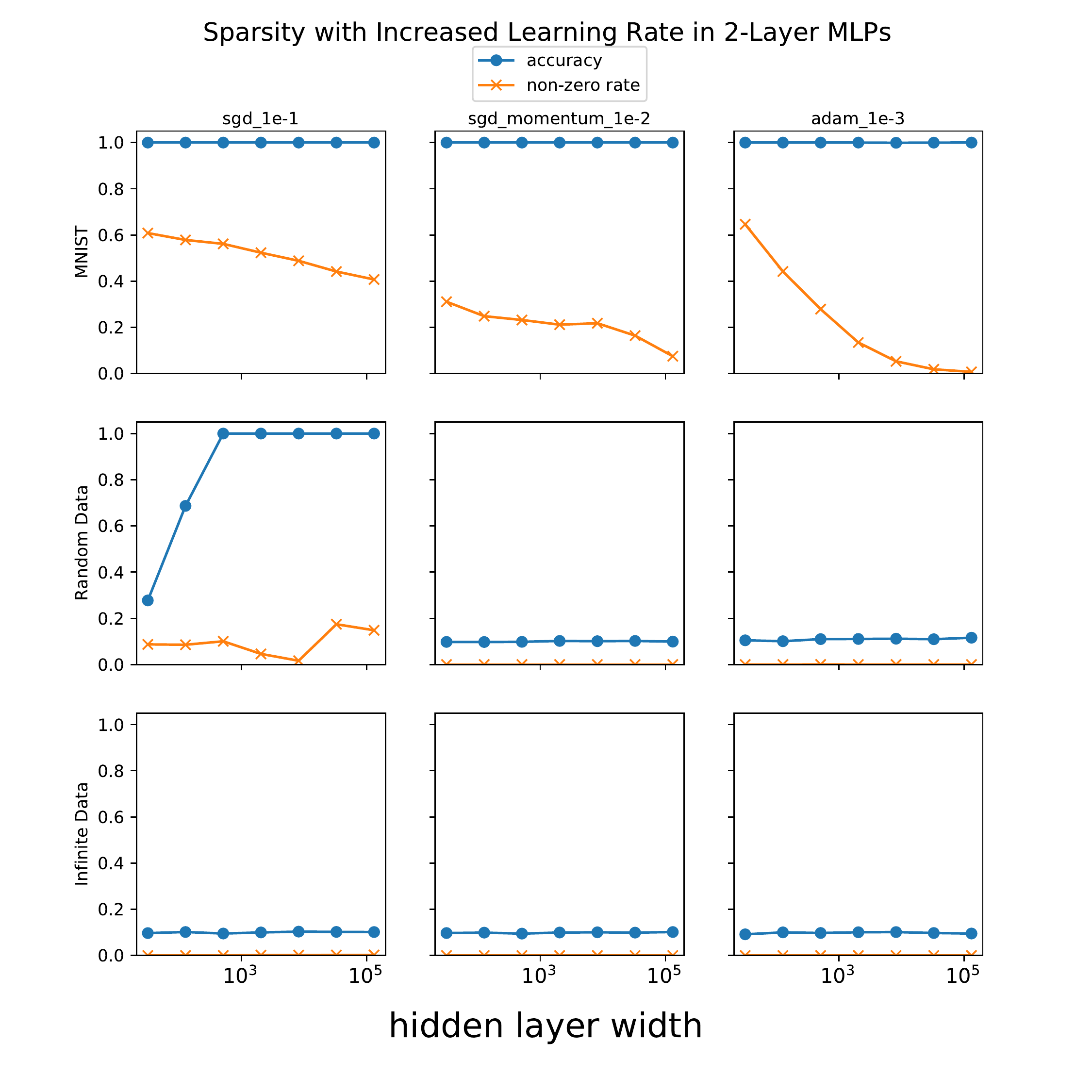}
    \caption{The same as in \ref{fig:mlp_sparsity} except that for each optimizer we use a larger learning rate. }
    \label{fig:mlp_sparsity_large_lr}
\end{figure*}

\myparagraph{Results.} We report training accuracy and the percentage of nonzero entries in the intermediate activation map (i.e., non-zero rate) at the end of the training in Figure~\ref{fig:mlp_sparsity}. 
We have the following observations.
\begin{itemize}[leftmargin=*]
    \item For random data, we observe a \emph{uni-modal} shaped curve for sparsity level. Namely, when the model width is small hence the model cannot well-fit the training data, the percentage of nonzero entries is small. As the model width increases, where the model is able to fit the training data evidenced by the fact that the training accuracy increases, we observe that the percentage of nonzero entries starts to increase. 
    However, as we further increase the model size in the regime where model is able to perfectly fit the training data, we see that the percentage of nonzeros starts to decrease.
    \item For infinite data, where the model cannot fit the training data (hence training accuracy is 0.1 which is the same as result from random guessing), the percentage of nonzero entries is close to 0. 
    This is aligned with the result of random data experiment with a small model width.
    \item For MNIST, where the model of varying width in our experiment is able to fit the training data, we observe that the percentage of nonzero entries decreases. 
    This trend aligns with the random data experiment with large model width 
\end{itemize}
The evidence above suggest that the sparsity level may be associated with the under- and over-parameterization of the models. 
Namely, the percentage of nonzero entries is the highest when the model size is close to the point that the model can start to fit the training data (i.e., the interpolation threshold), and is lower in both under and over-parameterized regimes. 
It may be intriguing to note that a similar pattern exists for the variance (as in the bias-variance tradeoff) curve of deep learning models, which as shown in \cite{yang2020rethinking} to exhibit a uni-modal shape as well.
Such a connection may help us understand the interplay between generalization and sparsity of activation in deep learning models. 

Finally, in Figure~\ref{fig:mlp_sparsity_large_lr} we report the results obtained with optimizers with larger learning rates compared to those used in Figure~\ref{fig:mlp_sparsity}. It can be observed that larger learning rate produces sparser activation maps.




\section{Frequently Asked Questions}
\label{sec:faq}

\myparagraph{Q. Isn't that well-known that the output of ReLU is a sparse vector?}

\flushleft A. Typically, for randomly initialized networks the output of ReLU has around $50\%$ nonzero entries, and may be regarded sparse. However, what we observe is a much stronger level of sparsity, where in a trained network the percentage of nonzeros at output of ReLU is much lower, e.g. $3\%$ nonzeros in T5 (see Figure~\ref{fig:T5_sparsity_across_layers}).

\hspace{1em}

\myparagraph{Q. Does the activation sparsity imply that many of the neurons (i.e., columns of $\K, \V$ in \eqref{eq:def-MLP}) can be removed?}

\flushleft A. No. Sparsity means that each input to the MLP activates a very small set of neurons. However, different inputs may activate different sets of neurons. Collectively, the set of all possible inputs to the MLP may activate all the neurons; our result in Figure~\ref{fig:sparsity_of_individual_neurons} shows that this is indeed the case, though the neuron activation has a long tail distribution. 

\hspace{1em}

\myparagraph{Q. Are you applying a thresholding to the entries when measuring sparsity level?}

\flushleft A. No. We are using ReLU as activations in both ViT and T5. ReLU produces exact zeros for non-positive inputs, so no thresholding is needed in measuring the sparsity level. 

\hspace{1em}

\myparagraph{Q. GELU is used more often than ReLU now. Does that mean that your paper is no longer relevant?}

\flushleft A. We opt to use ReLU in our study as it makes it very easy to measure the sparsity level without the hassle of setting a thresholding. As shown in Figure~\ref{fig:activations_bert}, even with GeLU we observe that the pre-activation is strongly biased towards negativity. 

\hspace{1em}

\myparagraph{Q. Is sparsity unique to Transformer? Or is it present in other architectures as well?}

\flushleft A. Sparsity emerges in convolutional networks (specificaly, ResNet) as well, though less sparse; see Figure~\ref{fig:sparsity_ResNet18_nobn}. In addition, sparsity occurs in (pure) MLP as well, see Section~\ref{sec:mlp_insights}.

\hspace{1em}

\myparagraph{Q. Your Theorem~\ref{thm:gradient-on-activation} has nothing to do with Transformers. Does it apply to other architectures?}

\flushleft A. Yes. And we do also observe that sparsity occurs in other architectures as well, see the previous answer. 

\hspace{1em}

\hspace{1em}

\end{document}